\pdfoutput=1 % MUST stay within the first 5 lines: tells arXiv to use pdfLaTeX.
\documentclass[11pt]{article}

\usepackage[font=libertinus, citestyle=numeric]{kurbanlab}
\DeclareAffiliation{hbku}{%
  College of Science and Engineering, Hamad Bin Khalifa University, Doha, Qatar}

\DeclareAffiliation{tamu}{%
  Department of Computer and Electrical Engineering,
  Texas A\&M University, College Station, TX, USA}

\DeclareAffiliation{iub}{%
  Luddy School of Informatics, Computing, and Engineering,
  Indiana University Bloomington, Bloomington, IN, USA}

\DeclareAffiliation{tamuq}{Department of Electrical and Computer Engineering, Texas A\&M University at Qatar, Doha, Qatar}
\title{Conformal Coverage Guarantees\\
for Any Video Temporal Grounder}
\Subtitle{\cover{}: a post-hoc wrapper that turns any grounder into one that emits a temporal
region containing the true moment with probability at least $1-\alpha$.}
\RunningTitle{Conformal Coverage Guarantees
for Any Video Temporal Grounder}

\Author{Aseel Mohamed}{tamuq}
\Author{Rasul Khanbayov}{hbku}
\Author{Erchin Serpedin}{tamu}
\Author[corresponding=hkurban@hbku.edu.qa, orcid=0000-0003-3142-2866]{Hasan Kurban}{hbku}
\Keywords{video temporal grounding; black-box model; localizer; wrapper }
\CodeURL{https://github.com/KurbanIntelligenceLab/cover}
\Venue{Submitted to AAAI'27}          % or: \Venue{To appear at NeurIPS 2026}
\begin{document}
\maketitle

\begin{abstract}
Event boundaries in continuous video are ambiguous: re-annotate the same
query--video pair and independent annotators mark moments that overlap by less
than half on a large fraction of samples. The ground truth for video temporal
grounding is therefore a distribution over intervals, yet every grounder returns
a single interval with no statement of reliability, so at deployment a wrong
interval is indistinguishable from a right one. \cover{} changes the output
object: a post-hoc, model-agnostic wrapper that turns any grounder, a trained
localizer or a black-box video--language model, into one that emits a temporal
region containing the true moment with probability at least $1-\alpha$, by
calibrating the quantile of a temporal nonconformity score on held-out labels and
widening the base prediction by that amount. The guarantee is finite-sample and
distribution-free under exchangeability, and requires neither retraining nor
white-box access. We give two score families, a two-sided boundary-widening score
for grounders that emit an interval and a super-level-set score for grounders
that emit a relevance signal, and develop theory specific to grounding that
bounds how large the certified region becomes, when coverage survives
conditioning on event length, and how it degrades when moments from one video
break exchangeability. Across three benchmarks and five grounders, realized
coverage tracks the target,
and calibration exposes what point metrics hide. An evidential grounder
purpose-built for uncertainty quantification realizes only $0.660$ against its
own nominal $0.80$ target, while \cover{} restores a valid guarantee on the
identical predictions. A video--language model's event onsets prove essentially
free of error while all of its boundary error concentrates on offsets. And
hand-picked fixed margins swing realized coverage by up to $0.904$, so the
calibration step is what makes a region mean anything.
\end{abstract}

\printkeywords

%%%%%%%%%%%%%%%%%%%%%%%%%%%%%%%%%%%%%%%%%%%%%%%%%%%%%%%%%%%%%%%%%%%%%%%%%%%%%%%%

\begin{figure}[t]
  \centering
  \kilgraphics{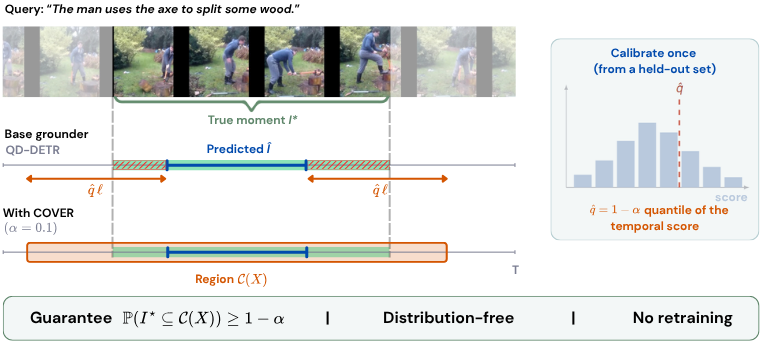}
  \caption{\textbf{\cover{} turns any grounder into one with a coverage guarantee.}
  A base grounder returns a single interval $\Ihat$ (blue) that can miss a part of the true moment $\Igt$ (green). The missed tail (red) comes with no warning. \cover{} reads off, once on a held-out calibration set, the $(1-\alpha)$
quantile $\qhat$ of a temporal nonconformity score (right), then widens every
prediction by $\qhat\,\ell$ per side into a region $\Cset(X)$ (orange) that
contains $\Igt$ with probability at least $1-\alpha$. The guarantee is
distribution-free, needs no retraining, and holds for trained localizers and
black-box VLMs alike.}
  \label{fig:teaser}
\end{figure}

\section{Introduction}
\label{sec:intro}

Ask several people to mark when an event begins and ends in a continuous video
and they will not agree. Re-annotate the same query--video pair and the marked
moments overlap by less than half on a large fraction of samples
\citep{otani2020hidden,dtgspl2023}, an ambiguity surveys name a core open problem
of the task \citep{zhang2023tsgv}. It is a property of continuous video, not an
artifact of any model, so the ground truth for temporal grounding is a
distribution over intervals rather than a point.

Grounders report a point anyway. A temporal grounder takes a video and a
language query and returns the interval where the queried event occurs
\citep{gao2017tall,krishna2017anet,zhang2023tsgv}, and modern systems, whether
trained localizers \citep{lei2021qvhighlights,moon2023qddetr,lee2024bamdetr,
lin2023univtg} or video--language models prompted for timestamps
\citep{huang2023vtimellm,zheng2024tfvtg,f2g2026}, return one interval and stop.
The mismatch is not cosmetic: in deployment a confident interval that is wrong
looks exactly like a confident interval that is right, and any downstream
consumer, a retrieval index, an editing tool, or an evidence-seeking agent, has
no principled way to decide when to widen its search or defer.

The right output object is a region carrying a statement of the form ``the true
moment lies here with probability at least $1-\alpha$.'' Two routes to one
suggest themselves, and neither delivers. The first is to pad the predicted
interval by a margin, but no margin is knowable in advance: across six
hand-picked constants we measure realized coverage anywhere from $0.096$ to
$1.000$, and a padding adequate on one dataset misses its target by more than
$0.3$ on another (Table~\ref{tab:e1-uncalibrated}). The second is to train the
uncertainty in, as methods that learn a boundary variance or an evidential
distribution do \citep{ma2024beyond,evidential_mr2025}. These are tied to
one architecture, require retraining, and cannot wrap a grounder served behind a
text-only API, and they do not always deliver the reliability they report, as an
evidential grounder's own predicted boundary uncertainty, converted to a region
at its nominal $0.80$ level, realizes only $0.660$ (Appendix Table \ref{tab:e1-SRAM}).

Conformal prediction supplies precisely the missing guarantee, distribution-free
finite-sample coverage from a held-out calibration set
\citep{vovk2005algorithmic,angelopoulos2023gentle,lei2018distribution,
angelopoulos2024foundations}. It has been instantiated for classification and
regression \citep{angelopoulos2021uncertainty,romano2019cqr}, for spatial object
detection and segmentation \citep{mukama2024copula,bates2021rcps,
andeol2025seqcrc,timans2024adaptivebox,instanceseg2026cp}, and recently for video
question answering \citep{videoqa_abstention2026}, but not for the object
grounding produces: the start and end of a moment.

\cover{} supplies it with a post-hoc wrapper (Figure~\ref{fig:teaser}). We define
a temporal nonconformity score equal to the smallest widening of the base
interval that makes it contain the true moment, take the $(1-\alpha)$ empirical
quantile of that score over a labeled calibration set, and output every test
prediction widened by that quantile. Exchangeability of calibration and test sets
then gives coverage at least $1-\alpha$. Because the
wrapper touches the grounder only through its outputs and a held-out label set,
it applies unchanged to a trained DETR-style localizer and to a black-box
video--language model. Two score families cover the two grounder types, and for
both we prove marginal coverage with a matching upper bound against gross
over-coverage, and give a risk-control variant that bounds the expected miss-rate
at a user level with high probability over the calibration draw
\citep{bates2021rcps,angelopoulos2024crc}.

\paragraph{Contributions} We make the following contributions:
\begin{enumerate}
  \item \textbf{A region, not a point, for a task whose ground truth is a
distribution.} The first distribution-free uncertainty quantification for video
temporal grounding: a post-hoc wrapper that attaches a finite-sample coverage
guarantee to \emph{any} grounder, trained or black-box, by calibrating a single
scalar on held-out labels, with no retraining and no white-box access.
  \item \textbf{Temporal nonconformity scores for both kinds of grounder.} A
two-sided boundary-widening score for grounders that emit an interval, and a
super-level-set score for grounders that expose a relevance signal, the latter
returning regions that spend temporal budget only where the grounder
places relevance. Where boundary error is asymmetric, a two-parameter
per-boundary variant calibrates each side separately.
\item \textbf{Theory that says what the region costs, not only that it covers}: an efficiency identity with asymptotic length-optimality of
the calibrated widening, length-conditional coverage of the normalized score
under a scale model, and an explicit coverage bound under the within-video dependence
that breaks exchangeability. Standard coverage, tightness, and risk-control
results are in the Appendix.
\item \textbf{Certification as measurement.} Across three benchmarks and five
grounders, realized coverage tracks the target, and calibration exposes what
point metrics hide: an evidential grounder built for uncertainty quantification
misses its nominal coverage by $14$ points while \cover{} restores a valid
guarantee on the identical predictions, a VLM's event onsets are essentially free
while all of its boundary error sits on offsets, and hand-picked margins swing
realized coverage range by up to $0.904$.
\end{enumerate}

\begin{kilkey}
\cover{} is a post-hoc, model-agnostic wrapper that turns any grounder, a trained
localizer or a black-box video--language model, into one that emits a temporal
region containing the true moment with probability at least $1-\alpha$.
\end{kilkey}

%%%%%%%%%%%%%%%%%%%%%%%%%%%%%%%%%%%%%%%%%%%%%%%%%%%%%%%%%%%%%%%%%%%%%%%%%%%%%%%%
\section{Related work}
\label{sec:related}

\paragraph{Uncertainty in temporal grounding}
Prior uncertainty-aware grounders learn a distribution over boundaries, for
example Gaussian boundary variances or evidential heads trained jointly with the
localizer \citep{ma2024beyond,huang2022uncertainty}, an approach still being
actively developed for moment retrieval \citep{evidential_mr2025}. They are
motivated by the same observation we are, that event boundaries in continuous
video are inherently ambiguous, an effect measured directly when the same
query--video pair is re-annotated by multiple people and their moments overlap by
less than half on many samples \citep{otani2020hidden,dtgspl2023}. But they
quantify uncertainty in an architecture-specific way, require retraining, and
offer no finite-sample coverage statement, so a user cannot set a target error
rate and have it met. \cover{} instead is post-hoc and distribution-free: it
wraps a fixed grounder and delivers a coverage level the user chooses, which is a
different guarantee and a different deployment model.

\paragraph{Conformal prediction and risk control}
Conformal prediction turns any point predictor into a set predictor with
distribution-free, finite-sample marginal coverage under exchangeability
\citep{vovk2005algorithmic,lei2018distribution,angelopoulos2023gentle,
angelopoulos2024foundations}. Conformalized quantile regression calibrates two-sided regression intervals \citep{romano2019cqr}, and risk-controlling prediction sets, conformal risk control, and Learn-then-Test extend the guarantee from coverage to losses monotone in the set-size parameter, such as false-negative rate and intersection-over-union
\citep{bates2021rcps,angelopoulos2024crc,angelopoulos2021ltt}. The same tools now underpin a fast-growing line on reliability for language and
vision--language models, where conformal risk control and selective conformal
prediction bound answer-set error or drive abstention
\citep{videoqa_abstention2026}. That line certifies which answer to return or
whether to answer at all, a different output object than the temporal region we
certify. Our contribution is not a new calibration machinery but the object it is
pointed at: the temporal region, the scores that make intervals and relevance
signals conformable, the theory governing how large the region gets, and what
certifying real grounders reveals about them. Exact conditional coverage is
impossible distribution-free \citep{barber2021limits}. Mondrian/group-conditional
calibration \citep{vovk2012conditional} and recent conditional-guarantee methods
\citep{gibbs2024conditional} are the practical response we adopt for length and
class strata. When exchangeability fails, across clips of one video or across
datasets, weighting and adaptive variants restore validity
\citep{tibshirani2019covariate,barber2023beyond,gibbs2021aci}.

\paragraph{Conformal methods in vision and video}
The spatial analogue is established: conformal prediction calibrates a margin on
predicted boxes or masks so the object is contained at a guaranteed rate
\citep{mukama2024copula,andeol2023railway,timans2024adaptivebox,instanceseg2026cp},
and recent work certifies the full detection pipeline through \emph{sequential,
two-parameter} risk control for any backbone including DETR \citep{andeol2025seqcrc}.
That two-parameter structure is forced by the detector: separate thresholds for
confidence filtering and for localization, mediated by non-maximum suppression.
Grounding poses a different problem. There is no object set
and no suppression step, but the target is a continuous extent whose endpoints
are ambiguous in the data itself, so the difficulty moves from deciding
\emph{which} predictions to certify to deciding \emph{how far} a single one must
reach, and the interesting structure is temporal rather than combinatorial: how
region length scales with event length, whether coverage survives conditioning on
duration, and what happens when moments drawn from one video cease to be
exchangeable. The closest neighbors to position against are therefore the
conformal object-detection line \citep{andeol2025seqcrc} and the video-QA
reliability line \citep{videoqa_abstention2026}. Because moments come from continuous video, the exchangeability
question also links \cover{} to conformal time-series forecasting
\citep{stankeviciute2021cts,barber2023beyond}.

%%%%%%%%%%%%%%%%%%%%%%%%%%%%%%%%%%%%%%%%%%%%%%%%%%%%%%%%%%%%%%%%%%%%%%%%%%%%%%%%
\section{Problem Setup}
\label{sec:setup}

\paragraph{Grounder as a black box}
A grounder $g$ maps an input $X=(\text{video},\text{query})$ to a predicted
interval $\Ihat(X)=[\shat,\ehat]\subseteq[0,T]$, and optionally to a temporal
relevance signal $f_X:[0,T]\to\R$ (a saliency or per-clip score). The true moment
is $\Igt=[\sstar,\estar]$. We treat $g$ as fixed and access it only through
$\Ihat$ (and $f_X$ when available). Nothing about $g$ is assumed or retrained.

\paragraph{Goal.}
For a user-chosen $\alpha\in(0,1)$, output a temporal region $\Cset(X)\subseteq
[0,T]$ such that
\[
\Prob\big(\Igt \subseteq \Cset(X)\big)\;\ge\;1-\alpha,
\]
where the probability is over a fresh draw of $(X,\Igt)$ and the calibration set.
Containment of the whole true interval is the natural grounding analogue of
coverage: the region is trusted to hold the event, so missing either boundary is
a failure. We also consider the weaker miss-rate objective in
\Cref{sec:theory}.

\paragraph{Calibration data}
We assume a labeled calibration set $\{(X_i,\Igt_i)\}_{i=1}^{n}$ drawn
exchangeably with the test point, disjoint from anything used to build $g$. This
reuses existing ground-truth intervals, it adds no new annotation. The wrapper's
only fitted quantity is a scalar threshold.

%%%%%%%%%%%%%%%%%%%%%%%%%%%%%%%%%%%%%%%%%%%%%%%%%%%%%%%%%%%%%%%%%%%%%%%%%%%%%%%%
\section{Method: \cover{}}
\label{sec:method}

\cover{} chooses a nested family of candidate regions $\Cset_\lambda(X)$ indexed
by a scalar $\lambda$ (larger $\lambda$ gives a larger region and a negative
calibrated $\lambda$ tightens the base interval when it already over-covers), defines a
nonconformity score equal to the smallest $\lambda$ that covers the truth,
calibrates the score's quantile, and returns the region at that quantile
(Figure \ref{fig:method} in the Appendix). Two
instantiations follow, and both reduce to a one-dimensional split-conformal
calibration.

\paragraph{Interval-widening score}
For grounders that emit an interval, widen per side by a fixed scale (we use the
predicted length $\ell(X)=\ehat-\shat$ for adaptivity, or $1$ for plain seconds):
\[
\Cset_\lambda(X)=\big[\shat-\lambda\,\ell(X),\;\ehat+\lambda\,\ell(X)\big]\cap[0,T].
\]
The smallest $\lambda$ that makes $\Cset_\lambda(X)\supseteq\Igt$ is
\begin{equation}
\score(X,\Igt)=\max\!\left(\frac{\shat-\sstar}{\ell(X)},\;
\frac{\estar-\ehat}{\ell(X)}\right),
\label{eq:score-interval}
\end{equation}
the larger of the two per-side overshoots (negative when $\Ihat$ already strictly
contains $\Igt$). This is the two-sided analogue of conformalized regression
\citep{romano2019cqr} applied to the start and end boundaries jointly. Taking the
maximum over both sides keeps the score a single scalar, so one quantile delivers
joint two-sided coverage with no per-boundary correction and no second split of
the calibration data.

\paragraph{Per-boundary variant}
The maximum in \eqref{eq:score-interval} spends the same widening on both sides,
which is efficient when the two boundaries are comparably hard and wasteful when
they are not. A grounder that already locates event onsets well pays for that
side anyway. The two-parameter variant instead scores each boundary separately,
$\score^{\mathrm{s}}=(\shat-\sstar)/\ell(X)$ and
$\score^{\mathrm{e}}=(\estar-\ehat)/\ell(X)$, calibrates
$\qhat_{\mathrm{s}},\qhat_{\mathrm{e}}$ on their own quantiles as in
conformalized quantile regression \citep{romano2019cqr}, and returns
$[\shat-\qhat_{\mathrm{s}}\ell,\;\ehat+\qhat_{\mathrm{e}}\ell]\cap[0,T]$. It buys
per-side adaptivity at the cost of a second calibrated parameter, and which
member wins is an empirical property of the grounder, diagnosed by the calibrated
$\qhat_{\mathrm{s}}/\qhat_{\mathrm{e}}$ ratio.

\paragraph{Super-level-set score}
When the grounder exposes a temporal relevance signal $f_X$, take
$\Cset_\lambda(X)=\{t:f_X(t)\ge -\lambda\}$, a union of intervals. Covering all of
$\Igt$ requires $f_X(t)\ge-\lambda$ for every $t\in\Igt$, so the binding score is
\begin{equation}
\score(X,\Igt)=-\min_{t\in\Igt} f_X(t).
\label{eq:score-sls}
\end{equation}
This yields adaptive, possibly disconnected regions: when $f_X$ is multimodal the
region splits, spending temporal budget only where the grounder places relevance.

\paragraph{Split-conformal calibration}
Given either score, compute $\score_i=\score(X_i,\Igt_i)$ on the calibration set
and set the threshold to the finite-sample-corrected empirical quantile
\begin{equation}
\qhat=\Big\lceil (n+1)(1-\alpha)\Big\rceil\text{-th smallest of }
\{\score_1,\dots,\score_n\}.
\label{eq:qhat}
\end{equation}
At test, output $\Cset_{\qhat}(X)$ (Algorithm \ref{alg:cover} in the Appendix). The only
computation beyond running $g$ is sorting $n$ scalars, so the wrapper is
negligible in cost and adds no parameters to $g$.

\paragraph{Risk control and stratified calibration}
Two variants extend the wrapper. When the user wants the expected miss-rate held at
$\alpha$ with confidence $1-\delta$ over the calibration draw rather than full
coverage, we select $\lambda$ by a risk-controlling procedure \citep{bates2021rcps}:
the loss $\mathcal{L}_\lambda(X,\Igt)=\mathbf{1}[\score(X,\Igt)>\lambda]$ is monotone
in $\lambda$, so an upper-confidence bound on the empirical risk yields a
$\hat\lambda$ with $\Prob(\E[\mathcal{L}_{\hat\lambda}]\le\alpha)\ge 1-\delta$
(Proposition \ref{prop:rcps} in the Appendix). When marginal coverage hides undercoverage of a
subpopulation (short events, a rare class), calibrating a separate $\qhat$ per group
\citep{vovk2012conditional} restores coverage within each group at the cost of
splitting the calibration data. Exact attribute-free conditional coverage is not
attainable distribution-free \citep{barber2021limits}.

%%%%%%%%%%%%%%%%%%%%%%%%%%%%%%%%%%%%%%%%%%%%%%%%%%%%%%%%%%%%%%%%%%%%%%%%%%%%%%%%

\section{Analysis}
\label{sec:theory}

The marginal coverage guarantee is
the standard split-conformal result (Proposition \ref{prop:cov}). The rest of this
section is theory specific to temporal grounding, how large the regions are, when
coverage holds within event-length strata, and what happens when moments from one
video break exchangeability. Each result is stated in full here; the complete
proofs, along with standard coverage, tightness, and risk-control facts, are in
the Appendix.

\begin{assumption}[Exchangeability]\label{a:exch}
The calibration points $(X_i,\Igt_i)_{i=1}^n$ and the test point
$(X_{n+1},\Igt_{n+1})$ are exchangeable.
\end{assumption}
\begin{assumption}[Nested family]\label{a:nested}
$\Cset_\lambda$ is nested: $\lambda\le\lambda'\Rightarrow
\Cset_\lambda(X)\subseteq\Cset_{\lambda'}(X)$, and
$\Igt\subseteq\Cset_\lambda(X)\Leftrightarrow \score(X,\Igt)\le\lambda$.
\end{assumption}
\begin{assumption}[Frozen pipeline]\label{a:frozen}
The grounder $g$ and the score $\score$ are fixed before the calibration labels
are seen.
\end{assumption}

\begin{proposition}[Marginal coverage: foundation]\label{prop:cov}
Under Assumptions \ref{a:exch}--\ref{a:frozen}, Algorithm \ref{alg:cover} of the Appendix with
$\qhat$ from \eqref{eq:qhat} satisfies
$\Prob(\Igt_{n+1}\subseteq \Cset_{\qhat}(X_{n+1}))\ge 1-\alpha$.
\end{proposition}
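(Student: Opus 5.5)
The plan is to reduce the containment event to a rank statement about exchangeable scalar scores, and then apply the standard quantile lemma.

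\textbf{Reduction to scores.} By Assumption~\ref{a:nested}, the event $\Igt_{n+1}\subseteq\Cset_{\qhat}(X_{n+1})$ is identical to $\score_{n+1}\le\qhat$, where $\score_{n+1}=\score(X_{n+1},\Igt_{n+1})$. So it suffices to show
\[
\Prob(\score_{n+1}\le\qhat)\ge 1-\alpha .
\]
This holds for both the interval-widening score \eqref{eq:score-interval} and the super-level-set score \eqref{eq:score-sls}. The only score-specific check is that each family actually satisfies the iff in Assumption~\ref{a:nested}:
\begin{itemize}
\item For the widening family, clipping to $[0,T]$ is harmless because $\Igt\subseteq[0,T]$.
\item For the level-set family, the iff holds when the minimum in \eqref{eq:score-sls} is attained, for instance when $f_X$ is continuous or piecewise constant on clips. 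Otherwise an infimum must be used.
\end{itemize}

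\textbf{Exchangeability of the scores.} By Assumption~\ref{a:frozen}, $\score$ is a fixed measurable function applied identically to each pair $(X_i,\Igt_i)$, and it does not depend on the calibration labels. Combined with Assumption~\ref{a:exch}, this makes the vector $(\score_1,\dots,\score_{n+1})$ exchangeable.

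\textbf{Quantile lemma.} Let $k=\lceil (n+1)(1-\alpha)\rceil$.
\begin{itemize}
\item If $k>n$, set $\qhat=+\infty$. Coverage is then trivial, because $\Cset_{\infty}$ is the whole of $[0,T]$.
\item Otherwise, note that $\score_{n+1}\le\qhat$ holds whenever $\score_{n+1}$ is among the $k$ smallest values of the full set $\{\score_1,\dots,\score_{n+1}\}$. More precisely, $\score_{n+1}>\score_{(k)}$ (the $k$-th smallest calibration score) implies that at least $k$ of the $n+1$ values lie strictly below $\score_{n+1}$.
\end{itemize}
By exchangeability, the rank of $\score_{n+1}$ among the $n+1$ values is stochastically dominated by a uniform rank on $\{1,\dots,n+1\}$; breaking ties randomly only helps. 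Hence the probability that its rank is at most $k$ is at least $k/(n+1)\ge 1-\alpha$.

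\textbf{Main obstacle.} The only delicate step is handling ties without assuming continuous scores. Exchangeability gives uniformity of the rank only under tie-free or randomized tie-breaking. I would handle this by the counting argument: the number of indices $i\le n+1$ with $\score_i>\score_{(k)}^{\text{full}}$ is at most $n+1-k$, and by symmetry each index is equally likely to be among them. This yields
\[
\Prob\bigl(\score_{n+1}>\qhat\bigr)\le \frac{n+1-k}{n+1}\le\alpha .
\]
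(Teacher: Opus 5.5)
Your proposal is correct and follows the paper's own proof: you reduce containment to $\score_{n+1}\le\qhat$ via Assumption~\ref{a:nested}, get exchangeable scores from Assumptions~\ref{a:exch} and~\ref{a:frozen}, and apply the rank bound at $k=\lceil (n+1)(1-\alpha)\rceil$ to obtain coverage at least $k/(n+1)\ge 1-\alpha$. Your counting argument for ties and your convention $\qhat=+\infty$ when $k>n$ make rigorous two points the paper glosses over: its ``uniform up to ties'' remark, and the fact that the order statistic is undefined when $\alpha<1/(n+1)$.
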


\noindent This is split conformal \citep{lei2018distribution} with the temporal
score. The same data bound coverage above by $1-\alpha+\tfrac1{n+1}$
(Appendix Proposition \ref{prop:tight}) and control the expected miss-rate at level $\alpha$
with probability $1-\delta$ (Appendix Proposition \ref{prop:rcps}).

\paragraph{How large is the region?}
Let $S=\score(X,\Igt)$ denote the score as a random variable, $F_S$ its
distribution function, and $Q_{1-\alpha}(F_S)=\inf\{\lambda:F_S(\lambda)\ge
1-\alpha\}$ its $(1-\alpha)$ quantile. Coverage is worthless without a handle on
region size. The next result gives one and shows the calibrated widening is as
tight as the family allows, for both score families at once.

\begin{theorem}[Efficiency and length-optimality]\label{thm:eff}
Fix the interval score \eqref{eq:score-interval} with predicted length
$\ell(X)=\ehat-\shat>0$, under Assumptions \ref{a:exch}--\ref{a:frozen}.
\emph{(i) Length identity.} The region length obeys
$|\Cset_\lambda(X)|=\min\{\ell(X)(1+2\lambda),\,T\}\le\ell(X)(1+2\lambda)$, so
$\E|\Cset_{\qhat}(X)|\le\E[\ell(X)(1+2\qhat)]$, with equality to
$\E[\ell(X)](1+2\,\E[\qhat])$ when the test point is independent of the
calibration data.
\emph{(ii) Consistency (i.i.d.\ calibration).} If the calibration scores are
i.i.d.\ (a strengthening of Assumption \ref{a:exch}), then $\qhat\to
Q_{1-\alpha}(F_S)$ almost surely as $n\to\infty$; if in addition $F_S$ has a
positive density at $Q_{1-\alpha}(F_S)$, then
$\sqrt{n}\,(\qhat-Q_{1-\alpha}(F_S))=O_{\mathrm p}(1)$.
\emph{(iii) Family length-optimality.} For continuous $F_S$, among the
one-parameter family $\{\Cset_\lambda\}_{\lambda\ge0}$ the region
$\Cset_{Q_{1-\alpha}(F_S)}$ is the shortest with marginal coverage at least
$1-\alpha$: every $\lambda<Q_{1-\alpha}(F_S)$ has coverage below $1-\alpha$ and
every $\lambda>Q_{1-\alpha}(F_S)$ yields a strictly longer region.
\emph{(iv) The same holds for the relevance signal.} For the super-level-set
score \eqref{eq:score-sls} and the threshold family
$\{t:f_X(t)\ge\tau\}_{\tau\in\R}$, family coverage equals $F_S(-\tau)$ and
\cover{} uses $\tau=-\qhat$; since the temporal measure
$\E|\{t:f_X(t)\ge\tau\}|$ is also monotone in $\tau$, the threshold
$\tau=-Q_{1-\alpha}(F_S)$ gives the smallest expected-measure region among all
fixed-threshold rules on the signal at the target coverage.
\end{theorem}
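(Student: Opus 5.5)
The plan is to treat the four parts separately, since each reduces to an elementary fact once the geometry of $\Cset_\lambda$ is written down. Throughout we use the nested structure of Assumption~\ref{a:nested}, which for the interval score can be checked directly from \eqref{eq:score-interval}. The condition $\Igt\subseteq\Cset_\lambda(X)$ holds iff $\shat-\lambda\ell\le\sstar$ and $\estar\le\ehat+\lambda\ell$; the clipping to $[0,T]$ never matters because $\Igt\subseteq[0,T]$. This is equivalent to $\score\le\lambda$.

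\textbf{Part (i).} For $\lambda\ge -\tfrac12$, the unclipped set $[\shat-\lambda\ell,\ehat+\lambda\ell]$ has length $\ell(1+2\lambda)$. Intersecting with $[0,T]$ can only shrink it and caps it at $T$, which gives $|\Cset_\lambda|\le\min\{\ell(1+2\lambda),T\}$.

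The displayed identity needs care. Clipping at only one end gives a length strictly between $\ell(1+2\lambda)$ and $T$. So I would state and use only the inequality $|\Cset_\lambda|\le\ell(1+2\lambda)$, reading the ``$\min$'' as an upper envelope that is attained when the interval either fits inside $[0,T]$ or spills over both ends.

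Plugging in $\lambda=\qhat$ and taking expectations gives $\E|\Cset_{\qhat}|\le\E[\ell(X)(1+2\qhat)]$. Assume the test $X$ is independent of the calibration sample. Since $\qhat$ is a function of the calibration sample only, the expectation factorizes as $\E[\ell](1+2\E[\qhat])$, by linearity and independence.

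\textbf{Part (ii).} This is the strong consistency of empirical quantiles. The index $k_n=\lceil(n+1)(1-\alpha)\rceil$ satisfies $k_n/n\to1-\alpha$. By Glivenko--Cantelli, $\sup|\hat F_n-F_S|\to0$ almost surely.

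\emph{Lower side.} Fix $\epsilon>0$. We have $F_S(Q-\epsilon)<1-\alpha$ by the definition of the infimum, so eventually $\hat F_n(Q-\epsilon)<k_n/n$, which forces $\qhat> Q-\epsilon$.

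\emph{Upper side.} Here I expect the main obstacle. If $F_S$ is flat just above $Q$, then $F_S(Q+\epsilon)$ may equal exactly $1-\alpha$, and almost-sure convergence can fail. I would handle this by assuming $F_S(Q+\epsilon)>1-\alpha$ for all $\epsilon>0$, i.e.\ uniqueness of the quantile. This is implied by the continuity or positive-density hypotheses used elsewhere, and I would note it explicitly. Under that assumption, eventually $\hat F_n(Q+\epsilon)\ge k_n/n$, so $\qhat\le Q+\epsilon$.

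\emph{Rate.} The $\sqrt n$ rate follows from the Bahadur representation, or directly from the DKW inequality. Since $\sup|\hat F_n-F_S|=O_p(n^{-1/2})$ and $k_n/n-(1-\alpha)=O(1/n)$, a positive density $f_S(Q)$ lets us invert with a local linear lower bound: $|F_S(Q\pm t)-F_S(Q)|\ge c\,t$ for small $t$. This gives $|\qhat-Q|=O_p(n^{-1/2})$.

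\textbf{Part (iii).} By nestedness, the coverage of $\Cset_\lambda$ is $\Prob(S\le\lambda)=F_S(\lambda)$. For $\lambda<Q$ we get $F_S(\lambda)<1-\alpha$ by the definition of the quantile. Continuity gives $F_S(Q)=1-\alpha$, so $Q$ is feasible.

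For $\lambda>Q\ge0$ the unclipped length $\ell(1+2\lambda)$ is strictly larger. The clipped length is weakly larger pointwise, and strictly larger with positive probability whenever $\Prob(\text{the region at } Q\text{ is not all of }[0,T])>0$. I would add this mild nondegeneracy to make ``strictly'' precise. In expectation, strictness then follows.

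\textbf{Part (iv).} Set $\Cset^\tau=\{t:f_X(t)\ge\tau\}$. Then $\Igt\subseteq\Cset^\tau$ iff $\min_{\Igt}f_X\ge\tau$ iff $S\le-\tau$, so coverage is $F_S(-\tau)$. The condition $F_S(-\tau)\ge1-\alpha$ is equivalent to $-\tau\ge Q$, i.e.\ $\tau\le -Q$. Since $\tau\mapsto\Cset^\tau$ is decreasing in inclusion, the expected measure $\E|\Cset^\tau|$ is nonincreasing in $\tau$. The largest feasible threshold $\tau=-Q$ therefore minimizes expected measure among fixed-threshold rules at the target coverage. Finally, \cover{} uses $-\qhat$, which converges to $-Q$ by part (ii).
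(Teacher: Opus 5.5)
Your proposal is correct and follows the paper's proof part by part: the bracket-length computation with independence factorization in (i); consistency of the order statistic via Glivenko--Cantelli plus a $\sqrt{n}$ rate in (ii), where your DKW inversion is just an elementary substitute for the paper's Bahadur step; monotonicity of $F_S(\lambda)$ against region length in (iii); and the equivalence $\Igt\subseteq\{f_X\ge\tau\}\iff S\le-\tau$ in (iv). Your added care (proving only the inequality in (i), assuming a unique quantile for almost-sure convergence, and a nondegeneracy condition for strictness under clipping) repairs points the paper's proof passes over, with one small correction: continuity of $F_S$ alone does not give uniqueness of the quantile, since a continuous $F_S$ can be flat at level $1-\alpha$, so only the positive-density hypothesis does.
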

% \begin{proof}[Proof sketch]
% Clipping to $[0,T]$ only shortens the bracket of unclipped length $\ell(1+2\lambda)$,
% giving (i), with the equality from independence. For (ii)--(iv) the mechanism is
% one and the same: family coverage is a monotone function of the parameter
% ($F_S(\lambda)$, nondecreasing; $F_S(-\tau)$, nonincreasing) while region size
% moves strictly the other way, so the coverage constraint binds exactly at the
% quantile and the extremal valid parameter minimizes size. $\qhat$ converges to
% that quantile by Glivenko--Cantelli, with the $\sqrt{n}$ rate from the Bahadur
% representation under the density condition. Full proofs in the Appendix.
% \end{proof}
\begin{remark}[Scope of the optimality claims]\label{rem:scope}
Theorem \ref{thm:eff}(iii)--(iv) asserts optimality \emph{within} a
one-parameter family, not over all coverage regions: an input-varying width can be
shorter at the same coverage, and the per-boundary variant of \cref{sec:method}
is one such member. The finite-sample cost of estimating the quantile is
characterized exactly by Proposition \ref{prop:beta} in the Appendix, whose $O(n^{-1/2})$ spread
matches the resampled coverage variability in \cref{sec:Results}.
\end{remark}

\paragraph{When is coverage length-conditional?}
Marginal coverage is an average, and an average can hide a badly undercovered
subpopulation behind an overcovered one. Event length is the stratification that
matters most in grounding, and the choice of scale $\ell(X)$ in
\eqref{eq:score-interval} is exactly what decides whether coverage survives it.
The next result says the normalized score is length-conditional under a scale
model and the seconds-valued score is not. The prediction is directional and
therefore testable (Appendix Table \ref{tab:e3-sec}).

\begin{proposition}[Length-conditional coverage under a scale model]\label{prop:lcc}
Assume i.i.d.\ sampling and the \emph{scale model}: the normalized score
\eqref{eq:score-interval} is statistically independent of the true event length
$L=\estar-\sstar$ (relative boundary error does not depend on event length). Then
under Assumptions \ref{a:nested}--\ref{a:frozen}, \cover{} is length-conditional,
$\Prob(\Igt\subseteq\Cset_{\qhat}(X)\mid L=l)\ge1-\alpha$ for every $l$. The
seconds-valued score \eqref{eq:score-interval} with $\ell(X)\equiv1$ does not
satisfy this in general, because absolute boundary error grows with event length.
\end{proposition}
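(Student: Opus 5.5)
The plan is to reduce the length-conditional statement to the marginal guarantee of Proposition~\ref{prop:cov}, applied conditionally on the test event length. Write $L_i=\estar_i-\sstar_i$ for the calibration lengths and $L_{n+1}$ for the test length. Under i.i.d.\ sampling the pairs $(S_i,L_i)_{i=1}^{n+1}$ are i.i.d. The scale model says $S_i\perp L_i$, so the joint law factorizes as $F_S\otimes F_L$. The first step is to show that, conditional on $L_{n+1}=l$, the vector $(S_1,\dots,S_n,S_{n+1})$ is still i.i.d.\ with law $F_S$. This holds because the calibration scores do not depend on the test length at all, and $S_{n+1}$ is independent of $L_{n+1}$ by assumption. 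In particular the conditional score vector is exchangeable.

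The second step uses Assumption~\ref{a:nested}. It gives $\{\Igt_{n+1}\subseteq\Cset_{\qhat}(X_{n+1})\}=\{S_{n+1}\le\qhat\}$. Since $\qhat$ is the $\lceil (n+1)(1-\alpha)\rceil$-th order statistic of $S_1,\dots,S_n$, this event depends only on the score vector, so the usual rank argument applies. By exchangeability, the rank of $S_{n+1}$ among the $n+1$ scores is uniform, with ties broken favourably. Hence $\Prob(S_{n+1}\le\qhat\mid L_{n+1}=l)\ge\lceil (n+1)(1-\alpha)\rceil/(n+1)\ge 1-\alpha$. Assumption~\ref{a:frozen} guarantees that $\score$ is a fixed function, so the scores are measurable functions of the data and the conditioning does not alter the calibration step.

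The main obstacle is making ``for every $l$'' rigorous when $L$ is continuous, because conditioning on a null event is then only defined almost everywhere. I would handle this by working with regular conditional distributions. Independence lets me choose the version of the conditional law of $S_{n+1}$ given $L_{n+1}=l$ to be exactly $F_S$ for every $l$. The bound then holds pointwise rather than almost everywhere. Equivalently, I would state the result as $\Prob(S_{n+1}\le\qhat,\ L_{n+1}\in B)\ge(1-\alpha)\Prob(L_{n+1}\in B)$ for every Borel set $B$.

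For the negative claim about the seconds-valued score, a counterexample suffices. Take $S^{\mathrm{sec}}=L\cdot R$ with $R\perp L$ and a nondegenerate $R$ having positive upper tail, and let $L$ take two values $l_1<l_2$. Then $\qhat$ converges to the marginal quantile of $LR$. The conditional coverage at $L=l_2$ equals $F_R(Q_{1-\alpha}(F_{LR})/l_2)$, which is strictly below $1-\alpha$ because mixing with the smaller scale pulls the marginal quantile below the $l_2$-conditional quantile. For large $n$ this shows conditional coverage fails at $l_2$.
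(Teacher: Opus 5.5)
Your proof is correct and takes essentially the paper's route. Both arguments use i.i.d.\ sampling plus the scale model to show that conditioning on $L_{n+1}=l$ leaves the law of the scores unchanged, and then fall back on the marginal rank bound. The paper writes this as $\qhat\perp(S_{n+1},L_{n+1})$ followed by integrating $\Prob(S_{n+1}\le t)$ against the law of $\qhat$, while you write it as independence of the whole score vector from $L_{n+1}$. Your constant regular conditional version and your explicit $S^{\mathrm{sec}}=LR$ counterexample make rigorous what the paper only asserts.

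To tighten the counterexample, take $\ell(X)=L$ so the scale model genuinely holds. Also, ``strictly below $1-\alpha$'' needs $F_R$ continuous, $\Prob(R>0)>\alpha$ (so the marginal quantile $q$ is positive), and positive mass of $R$ on $(q/l_2,\,q/l_1]$. Nondegeneracy alone does not guarantee this.
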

% \begin{proof}[Proof sketch]
% Under the scale model $\score_{n+1}\perp L_{n+1}$, and $\qhat$ depends only on the
% calibration scores, so it too is independent of $L_{n+1}$ under i.i.d.\ sampling;
% conditioning on $L_{n+1}=l$ therefore leaves
% $\Prob(\score_{n+1}\le\qhat\mid L_{n+1}=l)=\Prob(\score_{n+1}\le\qhat)\ge1-\alpha$,
% which is conditional coverage by Assumption~\ref{a:nested}. Full proof in the Appendix.
% \end{proof}

\paragraph{Coverage when one video breaks exchangeability.}
Calibration and test moments taken from the same video are dependent, violating
Assumption \ref{a:exch}. The next result bounds the resulting loss and shows it
vanishes as the dependence does.

\begin{proposition}[Coverage under bounded non-exchangeability]\label{prop:nonexch}
Let $Z_i=(X_i,\Igt_i)$, write $Z_{1:n+1}=(Z_1,\dots,Z_{n+1})$, and let
$Z_{1:n+1}^{(i)}$ denote this sequence with the test point $Z_{n+1}$ and
calibration point $Z_i$ swapped. Under Assumptions \ref{a:nested}--\ref{a:frozen},
with the exchangeability defect
$\Delta=\tfrac{1}{n+1}\sum_{i=1}^{n} d_{\mathrm{TV}}\!\big(Z_{1:n+1},Z_{1:n+1}^{(i)}\big)$,
\cover{} satisfies
$\Prob(\Igt_{n+1}\subseteq\Cset_{\qhat}(X_{n+1}))\ge 1-\alpha-\Delta$.
\end{proposition}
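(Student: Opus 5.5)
This is the Barber--Cand\`es--Ramdas--Tibshirani argument for nonexchangeable conformal prediction, specialized to unweighted split conformal. The idea is to compare the actual data with the $n$ swapped sequences and average.

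\textbf{Step 1: miscoverage as a function of the whole sequence.} By Assumption~\ref{a:nested}, miscoverage of the test point is the event $\score_{n+1}>\qhat$, where $\qhat$ is the $\lceil(n+1)(1-\alpha)\rceil$-th smallest of $\score_1,\dots,\score_n$. Assumption~\ref{a:frozen} ensures $\score$ is a fixed function applied coordinatewise, so everything is a deterministic function of $Z_{1:n+1}$. Define, for any sequence $z_{1:n+1}$ and index $j$, the strangeness indicator
\[
\mathrm{err}_j(z_{1:n+1})=\mathbf{1}\big[\score(z_j)>\text{the }\lceil(n+1)(1-\alpha)\rceil\text{-th smallest of }\{\score(z_k)\}_{k=1}^{n+1}\big],
\]
computed with the full $(n+1)$-point multiset. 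The standard inflation step comes next. On the event $\score_{n+1}>\qhat$, adding $\score_{n+1}$ to the calibration multiset does not change its $\lceil(n+1)(1-\alpha)\rceil$-th smallest element. Hence the miscoverage indicator equals $\mathrm{err}_{n+1}(Z_{1:n+1})$, or is at most it. The set of indices with $\mathrm{err}_j=1$ has size at most $(n+1)-\lceil(n+1)(1-\alpha)\rceil\le\alpha(n+1)$, deterministically and for any sequence.

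\textbf{Step 2: swap and pay total variation.} Swapping coordinates $i$ and $n+1$ of the sequence turns $\mathrm{err}_{n+1}$ into $\mathrm{err}_i$, because the multiset in the quantile is permutation invariant. That is,
\[
\mathrm{err}_{n+1}(Z^{(i)}_{1:n+1})=\mathrm{err}_i(Z_{1:n+1}).
\]
Since $\mathrm{err}_{n+1}$ is $\{0,1\}$-valued,
\[
\E\,\mathrm{err}_{n+1}(Z_{1:n+1})\le \E\,\mathrm{err}_{n+1}(Z^{(i)}_{1:n+1})+d_{\mathrm{TV}}(Z_{1:n+1},Z^{(i)}_{1:n+1})=\E\,\mathrm{err}_i(Z_{1:n+1})+d_i,
\]
writing $d_i$ for that total-variation distance. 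The case $i=n+1$ holds trivially with $d_{n+1}=0$. Average over $i=1,\dots,n+1$ with weights $\tfrac1{n+1}$. The left side stays $\Prob(\text{miss})$. The right side is $\tfrac1{n+1}\E\sum_j \mathrm{err}_j+\Delta\le\alpha+\Delta$ by the counting bound of Step 1. This gives coverage at least $1-\alpha-\Delta$.

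\textbf{Main obstacle.} The delicate step is the inflation in Step 1. One must check that the calibration-only quantile used by \eqref{eq:qhat} dominates the event encoded by the symmetric $(n+1)$-point quantile. Concretely: $\score_{n+1}>\qhat$ implies $\score_{n+1}$ exceeds the $\lceil(n+1)(1-\alpha)\rceil$-th order statistic of the augmented set. I would verify this by noting the following. If $\score_{n+1}>\qhat$, then at least $\lceil(n+1)(1-\alpha)\rceil$ augmented scores are at most $\qhat$. So the augmented order statistic is at most $\qhat$, and hence strictly below $\score_{n+1}$.

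Ties need care when bounding the count of indices with $\mathrm{err}_j=1$. Using the strict inequality in $\mathrm{err}_j$, at most $(n+1)-\lceil(n+1)(1-\alpha)\rceil$ indices can strictly exceed that order statistic, so ties cause no loss. The case $\lceil(n+1)(1-\alpha)\rceil>n$ needs separate handling: there $\qhat=+\infty$ and the bound is trivial.
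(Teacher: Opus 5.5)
Your proof is correct and takes essentially the paper's route, which likewise compares the data with each swapped sequence $Z_{1:n+1}^{(i)}$, charges one total-variation term per swap, and averages with uniform weights $1/(n+1)$ as in \citet[Theorem~2]{barber2023beyond}. Your version is more complete than the paper's sketch: the deterministic bound $\sum_{j}\mathrm{err}_j\le\alpha(n+1)$ on the full $(n+1)$-point multiset, together with the swap identity $\mathrm{err}_{n+1}(Z^{(i)}_{1:n+1})=\mathrm{err}_i(Z_{1:n+1})$, is exactly what makes rigorous the paper's looser claim that ``the exchangeable rank bound applies to each of the $n+1$ positions'' (the swapped law is not exchangeable either).
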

% \begin{proof}[Proof sketch]
% Under exchangeability the rank of $\score_{n+1}$ among the $n+1$ scores is uniform,
% so $\Prob(A)\ge1-\alpha$ for $A=\{\Igt_{n+1}\subseteq\Cset_{\qhat}(X_{n+1})\}=
% \{\score_{n+1}\le\qhat\}$ (Proposition~\ref{prop:cov}). Each swap of the test point
% with a calibration point $i$ moves $\Prob(A)$ by at most
% $d_{\mathrm{TV}}(Z_{1:n+1},Z_{1:n+1}^{(i)})$, since total-variation distance bounds
% the change of any event probability; averaging the exact rank bound over the $n+1$
% positions and charging one such term per swap yields $\Prob(A)\ge1-\alpha-\Delta$.
% This is the beyond-exchangeability bound of \citet[Theorem~2]{barber2023beyond} with
% uniform weights, $\Delta=0$ when the moments are exchangeable, recovering
% Proposition~\ref{prop:cov}, and the full argument, the exchangeable case, and the
% behavior of $\Delta$ under mixing are in the Appendix.
% \end{proof}

%%%%%%%%%%%%%%%%%%%%%%%%%%%%%%%%%%%%%%%%%%%%%%%%%%%%%%%%%%%%%%%%%%%%%%%%%%%%%%%%
\section{Experiments}
\label{sec:experiments}

Table~\ref{tab:design} lists the six studies. Each isolates one question,
with coverage as the validity check and region length as the cost.

\begin{table}[t]
  \centering
  \caption{\textbf{Evaluation design.} Each row isolates one question answered in \cref{sec:Results} and the Appendix.}
  \label{tab:design}
  \small
  \renewcommand{\arraystretch}{1}
  \begin{tabular}{@{}>{\raggedright\arraybackslash}p{0.28\linewidth} p{0.6\linewidth}@{}}
    \toprule
    \hb
    \kilth{Study} & \kilth{Question it answers} \\
    \midrule
    Coverage validity &
    Does empirical coverage match $1-\alpha$ across the $\alpha$-grid, for each base grounder? \\
    Efficiency &
    Interval vs.\ super-level-set vs.\ fixed-margin scores at matched coverage. \\
    Risk control &
    Does RCPS miss-rate control hold at level $\alpha$ with probability $1-\delta$? \\
    Conditional coverage &
    Coverage by event length. Does Mondrian calibration restore group coverage? \\
    Cross-dataset transfer &
    Distribution shift under cross-dataset calibration, with weighted conformal as the remedy. \\
    Calibration size &
    Coverage vs.\ calibration size $n$, and wrapper compute cost. \\
    \bottomrule
  \end{tabular}
\end{table}

\paragraph{Datasets and base grounders}
We use Charades-STA \citep{gao2017tall}, ActivityNet-Captions
\citep{krishna2017anet}, and QVHighlights \citep{lei2021qvhighlights}. Each dataset
is split at the video level into a calibration pool and a test set (40/60, frozen at
seed 42), disjoint from anything used to build the grounder. Splitting by video
rather than by query keeps calibration and test exchangeable. Unless noted, coverage
variability is estimated by resampling within the frozen calibration pool (50
resamples at 50\% of its videos). To test model-agnosticism, we wrap two kinds of
grounder: a trained DETR-style localizer (QD-DETR \citep{moon2023qddetr} via
Lighthouse \citep{taichi2024emnlp}) and a black-box video--language model prompted
for interval retrieval (Qwen2.5-VL-7B-Instruct \citep{bai2025qwen25vl}). When the
black-box grounder returns no interval, a refusal policy sets $\Cset(X)=[0,T]$ for
every $\lambda$, covering trivially at maximal length. We, therefore, report Qwen metrics two ways: marginal and conditional on non-refusal.
The super-level-set score uses QVHighlights saliency where available. Lighthouse caps
usable video length, so we drop videos longer than 150 seconds. This affects only ActivityNet-Captions, removing 34.5\% of its original pool. Where QVHighlights ground truth spans multiple intervals, we score the envelope interval, i.e.\ the earliest start and latest end timestamp. Section \ref{app:extra} of the Appendix adds three further grounders, including SRAM \cite{ma2024beyond}, a training-based grounder that reports boundary uncertainty.

\paragraph{Metrics}
We report empirical coverage (the fraction of test points with
$\Igt\subseteq\Cset(X)$), efficiency (mean temporal length of $\Cset$, plus the
number of connected components for the super-level-set score), IoU between $\Cset$
and $\Igt$, and miss-rate. Coverage is the validity check and length is the cost.

\paragraph{Coverage validity}
This study underlies every other one here: it verifies that Proposition \ref{prop:cov} holds empirically, at the level a user requests. We sweep target coverage over a fixed six-point grid,
$\alpha\in\{0.05, 0.1, 0.2, 0.3, 0.4, 0.5\}$,
and calibrate and evaluate at every grid point independently, for both base grounders, on all three datasets, using both instantiations of the interval-widening score (Eq. \ref{eq:score-interval}) and, on QVHighlights, the super-level-set score (Eq. \ref{eq:score-sls}). For each combination, we report realized coverage against target, together with the mean region length at the same operating points (the coverage–efficiency frontier), and the resampling-based variability described above.

\paragraph{Efficiency at matched coverage}
Efficiency is the size of the region required to achieve the specified coverage. For a focused comparison between the different score types, reliability is fixed at 2 operating points ($1-\alpha=[0.8,0.9]$), rather than the full grid swept for coverage validity. We compare length-normalized interval score (\normscore), fixed-margin (seconds) interval score (\secscore), and the super-level-set score (\supralevel) efficiencies, at each target and for both grounders. Two further comparisons ask what the calibration step and the choice of family are worth. We sweep an uncalibrated fixed margin over six hand-picked constants through the same nested family and evaluation engine, isolating calibration as the only difference. We run a per-boundary two-parameter variant against the single-parameter score, and read its calibrated $\qhat_{\mathrm{s}}/\qhat_{\mathrm{e}}$ ratio as a diagnostic of each grounder's boundary asymmetry.

\section{Results}
\label{sec:Results}

\subsection{Coverage Validity}
\label{subsec:E0}

\begin{figure}[t]
  \centering
  \kilgraphics{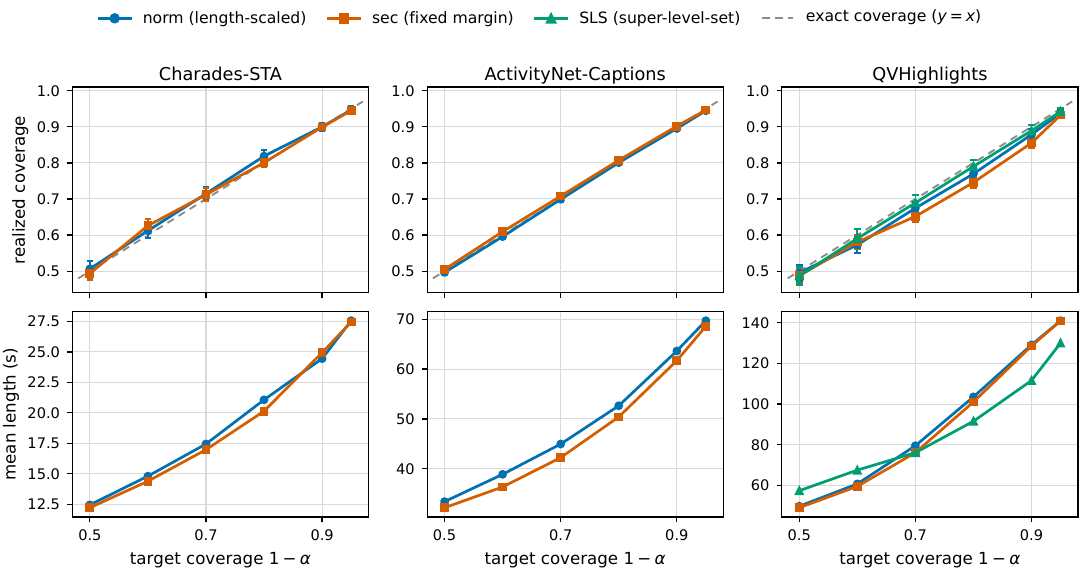}
  \caption{\textbf{overage validity and efficiency (QD-DETR).}
  Top row: realized
    coverage against the target $1-\alpha$. The dashed diagonal is exact coverage. Bottom row:
    mean region length against the same target, the coverage--efficiency frontier.
    Columns are the three datasets. The score families are \normscore\ (length-scaled),
    \secscore\ (fixed margin), and \supralevel\ (super-level-set).}
  \label{fig:coverage}
\end{figure}

\begin{figure}[t]
  \centering
\kilgraphics{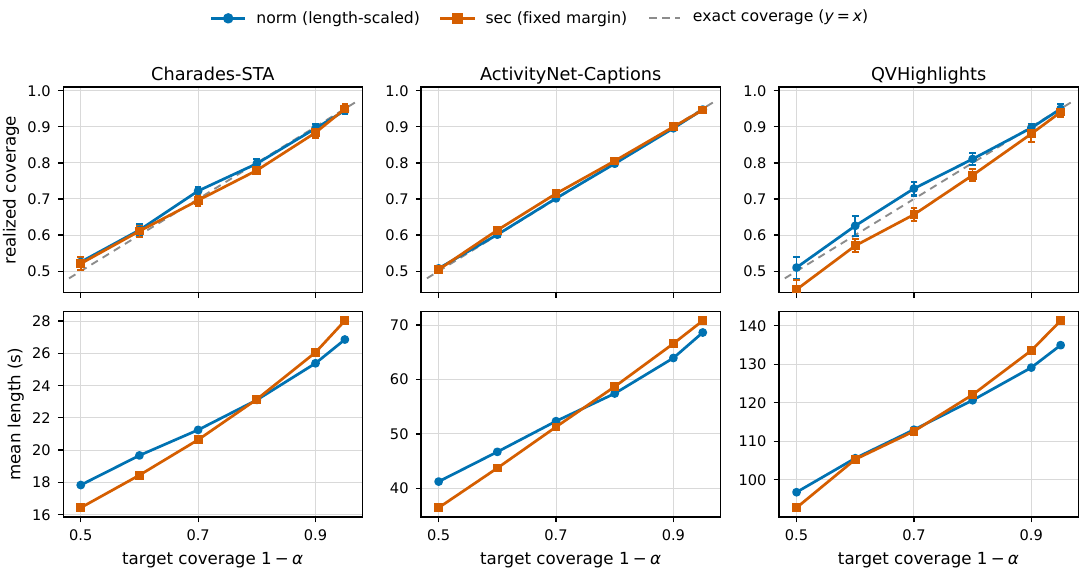}
  \caption{\textbf{Coverage validity and efficiency (Qwen2.5-VL-7B-Instruct).}
  For the two interval families the
    black-box grounder supports (it exposes no relevance signal, so there is no
    \supralevel\ curve). Marginal realized coverage tracks the target across datasets,
    confirming that the wrapper calibrates a grounder reached only through its predicted
    intervals.}
  \label{fig:E0_Qwen}
\end{figure}

Shown in Figures \ref{fig:coverage} and \ref{fig:E0_Qwen}, realized coverage tracks the target
closely over the full grid $1-\alpha \in \{0.50, 0.60, 0.70, 0.80, 0.90,
0.95\}$, confirming Proposition \ref{prop:cov} empirically for most
dataset--grounder pairs, with ActivityNet at $0.895\pm0.003$
($n_\mathrm{cal}=7811$) and Charades at $0.900\pm0.011$
($n_\mathrm{cal}=1500$) at the $0.90$ target. One case departs: QD-DETR on
QVHighlights runs below target ($0.879\pm0.016$), under the $k/(n+1)$ floor
Proposition \ref{prop:beta} in the Appendix predicts for i.i.d.\ calibration rows. A
video-level permutation test rules out the frozen split as the cause
(2000 permutations, no significant calibration-vs-test difference for either
grounder under either score family, $p\in[0.08,0.52]$; procedure in the
Appendix \ref{app:cov-further-results}). The gap is grounder-specific and traces to conditional coverage:
QD-DETR undercovers QVHighlights' Long and Multi-window strata by $0.148$ and
$0.127$ against Qwen's $0.008$ and $0.046$ (Appendix Table \ref{tab:e3}), and the
dataset's small, query-clustered pool ($625$ rows over $190$ videos) lets that
subgroup failure move the marginal average.

The super-level-set family's connected-components count averages between 1.25 and 1.51 across the grid on QVHighlights, highlighting disconnected behavior described in \cref{sec:method}. 

Qwen's refusal rate, instances where the grounder returns no usable interval, is 10.4\% on Charades-STA, 2.9\% on ActivityNet-Captions, and 2.6\% on QVHighlights. Marginal numbers are the ones the guarantee is about, but the non-refusal split is needed to compare Qwen's efficiency fairly against QD-DETR, which never refuses.

\subsection{Efficiency at Matched Coverage}
\label{subsec:E1}

\begin{table}[t]
  \centering
  \caption{\textbf{Region length at matched target coverage.}
  Region length (seconds, $\downarrow$) for both grounders.
  The best score family for each grounder, dataset, and target is shown in
  \textbf{bold}. Blank entries indicate that the score family is not applicable.
  Qwen values are marginal. Table \ref{tab:e1-qwen-refusal} in the Appendix reports non-refused
  length separately.}
  \label{tab:e1}
  \LARGE
  \resizebox{\textwidth}{!}{%
  \begin{tabular}{@{}ll
      *{3}{S[table-format=3.1]c}
      *{3}{S[table-format=3.1]c}@{}}

    \toprule
    \hb
    & &
    \multicolumn{6}{c}{$1-\alpha=0.80$ ($\downarrow$)} &
    \multicolumn{6}{c}{$1-\alpha=0.90$ ($\downarrow$)} \\

    \cmidrule(lr){3-8}
    \cmidrule(lr){9-14}
    \hb
    & &
    \multicolumn{2}{c}{\normscore} &
    \multicolumn{2}{c}{\secscore} &
    \multicolumn{2}{c}{\supralevel} &
    \multicolumn{2}{c}{\normscore} &
    \multicolumn{2}{c}{\secscore} &
    \multicolumn{2}{c}{\supralevel} \\

    \cmidrule(lr){3-4}
    \cmidrule(lr){5-6}
    \cmidrule(lr){7-8}
    \cmidrule(lr){9-10}
    \cmidrule(lr){11-12}
    \cmidrule(lr){13-14}
    \hb
    \kilth{Grounder} &
    \kilth{Dataset} &
    \kilth{Len.} & \kilth{CI} &
    \kilth{Len.} & \kilth{CI} &
    \kilth{Len.} & \kilth{CI} &
    \kilth{Len.} & \kilth{CI} &
    \kilth{Len.} & \kilth{CI} &
    \kilth{Len.} & \kilth{CI} \\

    \midrule

    \multirow{3}{*}{QD-DETR}
      & Charades-STA
      & 21.1 & {\scriptsize[20.61,21.53]}
      & \bfseries \textbf{20.1} & {\scriptsize[19.84,20.42]}
      & {--} & {--}
      & \bfseries 24.4 & {\scriptsize[23.96,24.95]}
      & 24.9 & {\scriptsize[24.59,25.30]}
      & {--} & {--} \\

      & ActivityNet-Captions
      & 52.6 & {\scriptsize[51.29,54.00]}
      & \bfseries \textbf{50.4} & {\scriptsize[49.42,51.38]}
      & {--} & {--}
      & 63.6 & {\scriptsize[62.02,65.31]}
      & \bfseries \textbf{61.7} & {\scriptsize[60.43,62.97]}
      & {--} & {--} \\

      & QVHighlights
      & 103.5 & {\scriptsize[100.95,106.19]}
      & 101.0 & {\scriptsize[99.27,102.84]}
      & \bfseries \textbf{91.5} & {\scriptsize[87.72,95.23]}
      & 129.1 & {\scriptsize[127.19,131.14]}
      & 128.6 & {\scriptsize[127.26,130.01]}
      & \bfseries \textbf{111.6} & {\scriptsize[108.22,114.94]} \\

    \addlinespace

    \multirow{3}{*}{Qwen}
      & Charades-STA
      & \bfseries \textbf{23.1} & {\scriptsize[22.47,23.78]}
      & \bfseries \textbf{23.1} & {\scriptsize[22.71,23.61]}
      & {--} & {--}
      & \bfseries 25.4 & {\scriptsize[24.78,25.99]}
      & 26.1 & {\scriptsize[25.59,26.53]}
      & {--} & {--} \\

      & ActivityNet-Captions
      & \bfseries \textbf{57.4} & {\scriptsize[55.90,58.87]}
      & 58.7 & {\scriptsize[57.42,59.83]}
      & {--} & {--}
      & \bfseries 63.9 & {\scriptsize[62.46,65.54]}
      & 66.6 & {\scriptsize[65.14,68.01]}
      & {--} & {--} \\

      & QVHighlights
      & \bfseries \textbf{120.6} & {\scriptsize[117.81,123.25]}
      & 122.1 & {\scriptsize[120.56,123.51]}
      & {--} & {--}
      & \bfseries 129.1 & {\scriptsize[127.00,131.30]}
      & 133.5 & {\scriptsize[132.42,134.66]}
      & {--} & {--} \\

    \bottomrule
  \end{tabular}}
\end{table}

Table~\ref{tab:e1} reports mean region length at $1-\alpha \in \{0.80, 0.90\}$ for each score family, per grounder. Figure \ref{fig:e1-frontier} in the Appendix plots the full QD-DETR frontier from which these two operating points are drawn. No family dominates uniformly. For QD-DETR on Charades-STA, the two interval families cross between the two targets: \secscore\ is shorter at 0.80 (20.1s vs.\ 21.1s) while \normscore\ is shorter at 0.90 (24.4s vs.\ 24.9s). On ActivityNet-Captions, \secscore\ is shorter at both targets with no crossing. On QVHighlights, \supralevel\ is shortest at both tested targets (91.5s and 111.6s vs.\ 101--103s and 128--129s for the interval families). This should not be concluded as a general property of the super-level-set family, since the fuller $\alpha$-grid in Figure \ref{fig:coverage} shows it losing to both interval families at looser targets ($1-\alpha \le 0.70$) and only overtaking them from roughly 0.80 upward.

Qwen shows a different, more consistent pattern: \normscore\ is shortest or tied on every dataset at both targets. Table \ref{tab:e1-qwen-refusal} in the Appendix isolates genuine calibrated widening from the refusal-driven inflation: Qwen's marginal length is measurably larger than its non-refused length wherever the refusal rate is non-trivial (e.g.\ Charades-STA: 25.4s marginal vs.\ 24.5s non-refused for \normscore), so refusal-conditional is the fair basis for comparing efficiency against QD-DETR. Looking at the full sweep in Figure \ref{fig:E0_Qwen}, \secscore\ is shorter at looser targets, with \normscore\ overtaking at $1-\alpha=0.8$ onwards.

\begin{table}[t]
  \centering
  \caption{\textbf{Coverage under fixed uncalibrated margins.}
  Realized coverage for the uncalibrated margin score (\secscore) swept across
  six hand-picked constants for both grounders. The final column reports the
  range (maximum minus minimum coverage) across the sweep for each row.
  \colorbox{underc}{Shaded} cells fall below $0.80$, the looser of COVER's two
  headline targets. For comparison, COVER's calibrated $\hat{\lambda}$ realizes
  coverage within approximately $0.02$ of the requested target on every
  (grounder, dataset) pair (Table~\ref{tab:e1},
  Figures~\ref{fig:coverage} and~\ref{fig:E0_Qwen}).}
  \label{tab:e1-uncalibrated}
  \scriptsize
  \setlength{\tabcolsep}{4.5pt}

  \resizebox{\linewidth}{!}{%
  \begin{tabular}{@{}ll*{6}{r}r@{}}
    \toprule
    \hb
    \kilth{Grounder} &
    \kilth{Dataset} &
    \kilth{$\lambda=5$} &
    \kilth{$\lambda=10$} &
    \kilth{$\lambda=20$} &
    \kilth{$\lambda=40$} &
    \kilth{$\lambda=80$} &
    \kilth{$\lambda=160$} &
    \kilth{Range} \\
    \midrule

    QD-DETR &
    Charades-STA &
    \colorbox{underc}{0.662} &
    0.819 &
    0.958 &
    1.000 &
    1.000 &
    1.000 &
    0.338 \\

    QD-DETR &
    ActivityNet-Captions &
    \colorbox{underc}{0.536} &
    \colorbox{underc}{0.644} &
    \colorbox{underc}{0.766} &
    0.883 &
    0.969 &
    1.000 &
    0.464 \\

    QD-DETR &
    QVHighlights &
    \colorbox{underc}{0.474} &
    \colorbox{underc}{0.552} &
    \colorbox{underc}{0.629} &
    \colorbox{underc}{0.723} &
    0.875 &
    1.000 &
    0.526 \\

    \addlinespace

    Qwen &
    Charades-STA &
    \colorbox{underc}{0.449} &
    \colorbox{underc}{0.659} &
    0.912 &
    0.999 &
    1.000 &
    1.000 &
    0.551 \\

    Qwen &
    ActivityNet-Captions &
    \colorbox{underc}{0.335} &
    \colorbox{underc}{0.424} &
    \colorbox{underc}{0.566} &
    \colorbox{underc}{0.759} &
    0.950 &
    1.000 &
    0.665 \\

    Qwen &
    QVHighlights &
    \colorbox{underc}{0.096} &
    \colorbox{underc}{0.104} &
    \colorbox{underc}{0.169} &
    \colorbox{underc}{0.325} &
    \colorbox{underc}{0.722} &
    1.000 &
    0.904 \\

    \bottomrule
  \end{tabular}}
\end{table}

Table~\ref{tab:e1-uncalibrated} presents a different approach for widening: choosing $\lambda$ by hand instead of calibration. A fixed expansion margin cannot reliably achieve a desired coverage target across datasets or models, with realized coverage ranging from $0.096$ to $1.000$ across the six constants tested. Calibration is necessary to obtain the desired coverage efficiently. The full analysis is in the Appendix.

\paragraph{One parameter or two is a property of the grounder}
Tables \ref{tab:e1-cqr-excess} and \ref{tab:e1-cqr-excess} in the Appendix compare the
single-parameter score against the per-boundary two-parameter variant \cref{fig:method}, and the answer splits by grounder. For QD-DETR the
single parameter is the right choice: the two-parameter variant is longer in 9
of 12 combinations, and its separately calibrated
boundaries land at comparable values throughout. For Qwen the ordering inverts: two parameters are shorter in 9 of 12 combinations, by $-3.4\%$ on average and up to $-9.1\%$ on QVHighlights at
$1-\alpha=0.80$. Both members carry the same coverage guarantee, so this is a
choice about efficiency alone, and the calibrated ratio in
Table \ref{tab:e1-cqr-lambdas} in the Appendix tells a practitioner which member to use before
any test data is seen.

\paragraph{Calibration as measurement: Qwen knows when events start, not when
they end} The reason two parameters help on Qwen is a pronounced asymmetry in
the grounder itself, which the calibration step measures directly.
$\hat\lambda_{\text{start}}$ calibrates to zero or near zero in seven of Qwen's twelve
rows and never exceeds $5.44$, while $\hat\lambda_{\text{end}}$ is never below
$3.31$ and reaches $115.4$ (Appendix Table \ref{tab:e1-cqr-lambdas}). The model locates when an
event begins reliably, and essentially all of its boundary error is in not
knowing when the event ends. It explains the efficiency result above: the single-parameter score takes the maximum over sides, so an
end-driven $\hat\lambda$ is applied to an already-adequate start side, and the
resulting waste scales with absolute event duration, which is why the saving is
largest on QVHighlights and smallest on Charades-STA.

\subsection{Further Studies}
\label{subsec:E2-E5}
Four studies stress the guarantee where it is most likely to break, and each
finds the predicted behavior together with the boundary at which it stops
holding. \emph{Risk control} verifies that the miss rate remains within the prescribed risk budget ($\delta=0.1$) across all configurations, with interval length increasing by only $1.4$--$14.8\%$ and shrinking at the expected $O(n^{-1/2})$ rate as the calibration set grows (Appendix Table \ref{tab:e2}). \emph{Conditional coverage} shows that Mondrian calibration substantially improves coverage within strata, reducing the worst-stratum deviation by more than $2.5\times$ on QD-DETR, but its benefit diminishes when the calibration pool is too small to populate each stratum (Appendix Table \ref{tab:e3} and Table \ref{tab:e3-sec}). \emph{Cross-dataset transfer} demonstrates that duration-aware reweighting largely restores coverage under shift, reducing the coverage gap from $18.8$ percentage points to $3.6$, while also illustrating that no reweighting method can compensate for regions of the target distribution unsupported by the source data, a positivity limitation (Appendix Table \ref{tab:e4a} and Table \ref{tab:e4b}). Finally, the \emph{calibration-size} study confirms the finite-sample behavior predicted by Theorem \ref{thm:eff}: coverage remains valid as the calibration size increases, with uncertainty shrinking from $\pm0.042$ to $\pm0.007$ at the expected $O(n^{-1/2})$ rate, while still providing usable guarantees with as few as 20 calibration videos, albeit with wider uncertainty (Appendix Table \ref{tab:e5}). Full protocols, tables, and discussion are presented in
Section \ref{app:extra} of the Appendix, along with evaluation on additional grounders (Appendix Table \ref{tab:tr-detr} and \ref{tab:internvl}) and a downstream-utility study connecting the certified
width to a deployment rule (Appendix Table \ref{tab:e-utility-selective} and \ref{tab:e-utility-corr}).

\section{Discussion and Limitations}
\label{sec:limits}

\paragraph{Exchangeability and shift}
The coverage guarantee rests on Assumption \ref{a:exch}, and our experiments locate a hard bound between the two ways it can fail. Across datasets the marginal distribution shifts, and weighted conformal
prediction \cite{tibshirani2019covariate} restores coverage under known covariate
shift, but only where the calibration data has support: the cross-dataset transfer study finds reweighting repairs one direction fully
and the other not at all, and the failing direction is a positivity limit no
reweighting scheme can fix. Within a long video, clips and queries are temporally
dependent, the beyond-exchangeability literature
\citep{barber2023beyond,gibbs2021aci,farinhas2023nonexch} gives adaptive remedies for when calibration and test share a video, and
Proposition \ref{prop:nonexch} bounds the resulting loss. We do not claim
coverage under arbitrary shift.

\paragraph{Marginal, not conditional}
\cover{} guarantees marginal coverage, and the conditional-coverage study shows what that permits: QD-DETR's
QVHighlights coverage is at $0.879$ marginally while its Long stratum sits at
$0.752$. Stratified calibration (Proposition \ref{prop:mondrian} in the Appendix) and score
normalization (Proposition \ref{prop:lcc}) each recover part of this, but exact
attribute-free conditional coverage remains unattainable distribution-free
\citep{barber2021limits}. A user who cares about a specific subpopulation should
calibrate on it.

\paragraph{Efficiency and calibration data}
Coverage is vacuous without efficiency, and the wrapper inherits the grounder's
quality rather than making a weak grounder strong. The super-level-set score is
only as good as the relevance signal it thresholds. Where partial overlap suffices, the miss-rate or an
IoU-based risk applies unchanged. The wrapper needs a labeled, exchangeable
calibration set of size $n$, which sets the finite-sample slack $1/(n+1)$. The
calibration-size study puts the practical floor at roughly $20$ videos.

% ============================== CONCLUSION =================================
\section{Conclusion}
\label{sec:conclusion}

Event boundaries in continuous video are ambiguous, so the ground truth for
temporal grounding is a distribution over intervals and a single predicted
interval is the wrong object to report. We presented \cover{}, a post-hoc,
model-agnostic wrapper that changes the output: it calibrates a temporal
nonconformity score on a small labeled set and widens any grounder's prediction
so the true moment is contained with a user-chosen probability, with no
retraining and no white-box access. We prove the region is asymptotically the
tightest of its family, that coverage holds within event-length strata under a
scale model, and that it degrades gracefully when one video breaks
exchangeability.

What the guarantee costs is measured, and what it reveals is not only a
guarantee. Across three benchmarks and five grounders, hand-picked margins
swing realized coverage by up to $0.904$ and an evidential grounder built for
uncertainty quantification misses its nominal level by $14$ points, while a
single calibrated scalar hits whichever target the user names.

\section*{Ethics and LLM-use Disclosure}
Large language models were used to assist with writing and editing this paper. The authors reviewed all content and take full responsibility for the work. The study relies only on publicly available datasets and models and raises no additional ethical concerns.

%%%%%%%%%%%%%%%%%%%%%%%%%%%%%%%%%%%%%%%%%%%%%%%%%%%%%%%%%%%%%%%%%%%%%%%%%%%%%%%%
%% BACK MATTER
%%%%%%%%%%%%%%%%%%%%%%%%%%%%%%%%%%%%%%%%%%%%%%%%%%%%%%%%%%%%%%%%%%%%%%%%%%%%%%%%

% \begin{contributions}
% C.P.\ and A.R.\ contributed equally: C.P.\ designed the angular channel and ran the
% QM9 and MD17 experiments; A.R.\ built the training infrastructure and the OC20
% pipeline. M.-L.C.\ contributed the theoretical analysis. S.A.-N.\ curated the defect
% benchmark. H.K.\ supervised the project and wrote the paper with input from all
% authors.
% \end{contributions}

\begin{acknowledgments}
We thank the members of the Kurban Intelligence Lab for discussions, and the Electrical and Computer Engineering Department at Texas A\&M University at Qatar for the provided support. 
\end{acknowledgments}

\begin{funding}
This work was supported by the Student Research Experience Grant (SREG) and the Department of Electrical and Computer Engineering at Texas A\&M University at Qatar.
\end{funding}

\begin{availability}
% This section SURVIVES blind mode (most venues require it), so the URL is yours
% to anonymize -- \ifbool{KIL@blind}{...}{...} does it for you, every time.
Code:
\ifbool{KIL@blind}
  {\url{https://anonymous.4open.science/r/XXXXXX}}
  {\url{https://github.com/KurbanIntelligenceLab/cover}}.
Datasets are public and cited in \cref{sec:experiments}.
\end{availability}

\begin{conflicts}
The authors declare no competing interests.
\end{conflicts}

%%%%%%%%%%%%%%%%%%%%%%%%%%%%%%%%%%%%%%%%%%%%%%%%%%%%%%%%%%%%%%%%%%%%%%%%%%%%%%%%
\bibliography{references}

%%%%%%%%%%%%%%%%%%%%%%%%%%%%%%%%%%%%%%%%%%%%%%%%%%%%%%%%%%%%%%%%%%%%%%%%%%%%%%%%

\clearpage
\appendix

\section{Standard Guarantees}
\label{app:standard}

These results are split-conformal and risk-control facts
\citep{vovk2005algorithmic,lei2018distribution,bates2021rcps} instantiated with
the temporal score. We state and prove them for completeness. Propositions
\ref{prop:tight} and \ref{prop:rcps} are referenced from \Cref{sec:theory}.

\begin{proof}[Proof of Proposition \ref{prop:cov}]
By Assumption \ref{a:nested}, $\Igt_{n+1}\subseteq\Cset_{\qhat}(X_{n+1})$ iff
$\score_{n+1}\le\qhat$, where $\score_{n+1}=\score(X_{n+1},\Igt_{n+1})$. By
Assumption \ref{a:frozen} the map $\score$ does not depend on the calibration
labels, so by Assumption \ref{a:exch} the scores $\score_1,\dots,\score_{n+1}$ are
exchangeable. For exchangeable scalars the rank of $\score_{n+1}$ among the $n+1$
values is uniform on $\{1,\dots,n+1\}$ up to ties, hence
\[
\Prob\big(\score_{n+1}\le \score_{(\lceil (n+1)(1-\alpha)\rceil)}\big)\ge
\frac{\lceil (n+1)(1-\alpha)\rceil}{n+1}\ge 1-\alpha,
\]
with $\score_{(k)}$ the $k$-th smallest calibration score and
$\qhat=\score_{(\lceil (n+1)(1-\alpha)\rceil)}$. Combining with the containment
equivalence gives the claim.
\end{proof}

\begin{proposition}[Upper bound]\label{prop:tight}
If $\score_1,\dots,\score_{n+1}$ are almost surely distinct, the coverage in
Proposition \ref{prop:cov} is at most $1-\alpha+\tfrac{1}{n+1}$.
\end{proposition}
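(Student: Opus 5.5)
The argument reuses the rank argument from the proof of Proposition \ref{prop:cov}, now with ties ruled out so that the rank bound becomes an equality. Write $k=\lceil (n+1)(1-\alpha)\rceil$ and $\qhat=\score_{(k)}$. We may assume $k\le n$; otherwise $\qhat=+\infty$ by convention and the upper bound has to be handled separately (see the last paragraph). By Assumption \ref{a:nested}, the coverage event equals $\{\score_{n+1}\le\qhat\}$. By Assumptions \ref{a:exch} and \ref{a:frozen}, the scores $\score_1,\dots,\score_{n+1}$ are exchangeable.

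The key step is to identify the event with a rank condition. Let $R$ be the rank of $\score_{n+1}$ among all $n+1$ scores. Since ties occur with probability zero, $R$ is well defined almost surely. Exchangeability makes every ordering of the $n+1$ values equally likely, so $R$ is exactly uniform on $\{1,\dots,n+1\}$.

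Next we check that $\score_{n+1}\le\score_{(k)}$ holds iff $R\le k$. If $R\le k$, then at most $k-1$ calibration scores lie below $\score_{n+1}$. Hence the $k$-th smallest calibration score is at least $\score_{n+1}$, and by distinctness it is strictly larger. Conversely, if $R\ge k+1$, then at least $k$ calibration scores lie strictly below $\score_{n+1}$, so $\score_{(k)}<\score_{n+1}$.

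Combining the two steps gives
\[
\Prob\big(\Igt_{n+1}\subseteq\Cset_{\qhat}(X_{n+1})\big)=\Prob(R\le k)=\frac{k}{n+1}.
\]
Since $\lceil x\rceil<x+1$, we have $k<(n+1)(1-\alpha)+1$, which yields the bound $1-\alpha+\tfrac{1}{n+1}$.

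The only delicate point is the boundary case $k=n+1$, which arises when $\alpha<1/(n+1)$. There $\qhat=+\infty$, coverage equals $1$, and the claimed bound still holds because $1<1-\alpha+\tfrac1{n+1}$ in exactly that regime. So the statement is valid without extra hypotheses, and I expect no real obstacle beyond this bookkeeping and the a.s. well-definedness of the rank.
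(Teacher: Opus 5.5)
Your proof is correct and follows the paper's own argument: with ties excluded, the rank of $\score_{n+1}$ is exactly uniform, so coverage equals $\lceil (n+1)(1-\alpha)\rceil/(n+1)$, and $\lceil x\rceil<x+1$ gives the bound. Your explicit check of the equivalence $\score_{n+1}\le\score_{(k)}\iff R\le k$ and your treatment of the case $k=n+1$ (where the paper's definition of $\qhat$ is left undefined) add care the one-line proof omits, but the route is the same.
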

\begin{proof}
Without ties the rank of $\score_{n+1}$ is exactly uniform, so
$\Prob(\score_{n+1}\le\qhat)=\lceil (n+1)(1-\alpha)\rceil/(n+1)$, and
$\lceil (n+1)(1-\alpha)\rceil<(n+1)(1-\alpha)+1$ gives the bound.
\end{proof}

\begin{proposition}[Miss-rate control]\label{prop:rcps}
Under Assumption \ref{a:exch}--Assumption \ref{a:frozen}, with the monotone loss
$\mathcal{L}_\lambda(X,\Igt)=\mathbf{1}[\score(X,\Igt)>\lambda]$, the risk-controlling choice
$\hat\lambda$ of \citet{bates2021rcps} from the calibration scores satisfies
$\Prob(\E[\mathcal{L}_{\hat\lambda}(X_{n+1},\Igt_{n+1})]\le\alpha)\ge 1-\delta$, the outer
probability over the calibration draw.
\end{proposition}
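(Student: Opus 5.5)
The plan is to reduce Proposition \ref{prop:rcps} to a one-dimensional binomial tail bound, using the monotone structure of the loss. Write $R(\lambda)=\E[\mathcal{L}_\lambda(X_{n+1},\Igt_{n+1})]=\Prob(\score_{n+1}>\lambda)=1-F_S(\lambda)$. This is nonincreasing and right-continuous in $\lambda$ by Assumption \ref{a:nested}. Because the loss is an indicator, the empirical risk is
\[
\hat R_n(\lambda)=\tfrac1n\sum_{i=1}^n\mathbf 1[\score_i>\lambda],
\]
and for each fixed $\lambda$ we have $n\hat R_n(\lambda)\sim\mathrm{Bin}(n,R(\lambda))$. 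Applying this requires i.i.d.\ calibration draws. Under pure exchangeability I would either add that strengthening (as in Theorem \ref{thm:eff}(ii)) or condition on the de Finetti mixing measure, since the bound holds conditionally for every mixture component.

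I would take the RCPS estimator in the form
\[
\hat\lambda=\inf\{\lambda:\hat R^+_n(\lambda')<\alpha\ \text{for all }\lambda'\ge\lambda\},
\]
where $\hat R^+_n(\lambda)$ is the exact binomial (Clopper--Pearson) upper confidence bound at level $1-\delta$. This bound is a nondecreasing function of $\hat R_n(\lambda)$, so it is nonincreasing in $\lambda$.

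The key steps, in order, are these.
\begin{enumerate}
\item Define the oracle $\lambda^\ast=\inf\{\lambda:R(\lambda)\le\alpha\}$. By right-continuity, $R(\lambda^\ast)\le\alpha$, and $R(\lambda)>\alpha$ for every $\lambda<\lambda^\ast$.
\item Since $R$ is monotone, the failure event $\{R(\hat\lambda)>\alpha\}$ is contained in $\{\hat\lambda<\lambda^\ast\}$.
\item By the definition of $\hat\lambda$ and monotonicity of $\hat R^+_n$, the event $\hat\lambda<\lambda^\ast$ forces $\hat R^+_n(\lambda)<\alpha$ at every $\lambda$ in $(\hat\lambda,\lambda^\ast)$. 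Letting $\lambda\uparrow\lambda^\ast$, this means the upper confidence bound falls below $\alpha$ at points whose true risk exceeds $\alpha$.
\item Reduce this to a single fixed point. Take $\lambda^-=\lambda^\ast-\epsilon$ and use left limits: $\hat R_n(\lambda^\ast-)=\tfrac1n\sum\mathbf 1[\score_i\ge\lambda^\ast]$ is binomial with mean $R(\lambda^\ast-)=\Prob(\score\ge\lambda^\ast)\ge\alpha$. The failure event then implies $\hat R^+_n$ evaluated at this left-limit count is below $\alpha$. For a valid binomial upper bound, that has probability at most $\delta$.
\end{enumerate}

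The main obstacle is step 4, the passage from "fails somewhere below $\lambda^\ast$" to a single fixed-$\lambda$ binomial event. Atoms of $F_S$ can make $R$ jump across $\alpha$, so $R(\lambda^\ast-)$ can be strictly above $\alpha$ while $R(\lambda^\ast)\le\alpha$. I would handle this with the left-limit counts described in step 4. Validity survives because $\hat R^+$ is monotone in the observed count and the Clopper--Pearson bound is valid whenever the true parameter is at least $\alpha$, so the monotone-likelihood property of the binomial covers the strict-inequality case.

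If the paper instead intends the Hoeffding form $\hat R^+_n=\hat R_n+\sqrt{\log(1/\delta)/(2n)}$, the same argument applies verbatim with Hoeffding's inequality replacing the exact binomial tail.

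Finally, since the outer probability is over the calibration draw alone, the test point enters only through $R(\cdot)$. Assumption \ref{a:frozen} guarantees that $\score$, and hence $R$, is a deterministic function fixed before the calibration labels are seen, which gives $\Prob(R(\hat\lambda)\le\alpha)\ge1-\delta$.
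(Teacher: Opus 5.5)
Your proof is correct. It follows the same skeleton as the paper's, but where the paper cites a theorem you prove it. The paper only notes that $\mathcal{L}_\lambda=\mathbf{1}[\score>\lambda]$ is nonincreasing in $\lambda$, so that $R(\lambda)=\E[\mathcal{L}_\lambda]$ is monotone, and then invokes the RCPS theorem of \citet{bates2021rcps} as a black box. You instead re-derive that theorem for the indicator loss: the oracle $\lambda^\ast$, the inclusion $\{R(\hat\lambda)>\alpha\}\subseteq\{\hat\lambda<\lambda^\ast\}$, and a reduction to one binomial count.

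The left-limit step is the part worth keeping. Write $u(K)$ for the upper bound as a function of the count. Because there are finitely many calibration scores, the failure event forces some $\lambda'\in(\hat\lambda,\lambda^\ast)$ whose count is $K^-=\#\{i:\score_i\ge\lambda^\ast\}\sim\mathrm{Bin}(n,R(\lambda^\ast-))$, with $R(\lambda^\ast-)\ge\alpha$. A jump of $R$ across $\alpha$ at $\lambda^\ast$ is therefore handled explicitly. Your appeal to the monotone-likelihood property is not needed: $u(K^-)<\alpha\le R(\lambda^\ast-)$ already puts you in the bound's failure event at the single parameter $R(\lambda^\ast-)$.

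You are also right that i.i.d.\ calibration draws are required, and this is more than a technicality. Under Assumption~\ref{a:exch} alone, the statement is false for the marginal risk the paper uses. Take a 50/50 mixture of two i.i.d.\ laws with disjoint score supports. For $n$ large, every calibration draw from the lower component gives $\hat\lambda$ below the upper support, hence $R(\hat\lambda)\ge 1/2$ on about half the draws. That violates the guarantee whenever $\alpha,\delta<1/2$.

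The same example shows why your de Finetti alternative cannot rescue the statement. Conditioning on the directing measure $P$ controls the component risk $R_P(\hat\lambda)$, not the mixture risk $R(\lambda)=\E[R_P(\lambda)]$. In addition, Assumption~\ref{a:exch} is finite exchangeability, where de Finetti need not apply.

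In short, the paper's citation buys brevity and covers general bounded monotone losses. Your route is self-contained and makes explicit the i.i.d.\ hypothesis that both the argument and the cited theorem actually need.
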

\begin{proof}
By Assumption \ref{a:nested}, $\mathcal{L}_\lambda=\mathbf{1}[\score>\lambda]$ is
nonincreasing in $\lambda$, so the risk $R(\lambda)=\E[\mathcal{L}_\lambda]$ is nonincreasing.
This is the monotone-loss setting of \citet{bates2021rcps}: form a pointwise
upper-confidence bound $\hat R^+(\lambda)$ on $R(\lambda)$ from the calibration
scores via a concentration inequality \citep{boucheron2013concentration} and set
$\hat\lambda=\inf\{\lambda:\hat R^+(\lambda')\le\alpha\ \forall\lambda'\ge\lambda\}$.
The RCPS theorem of \citet{bates2021rcps} gives $\Prob(R(\hat\lambda)\le\alpha)\ge1-\delta$;
$R(\hat\lambda)=\E[\mathcal{L}_{\hat\lambda}]$ yields the statement.
\end{proof}

\begin{proposition}[Group-conditional coverage]\label{prop:mondrian}
Partition inputs into groups by a discrete attribute (event-length bucket or
class). If a separate threshold $\qhat^{(k)}$ is calibrated on the $n_k$
calibration points of group $k$, then for a test point in group $k$,
$\Prob(\Igt\subseteq\Cset_{\qhat^{(k)}}(X)\mid \text{group}=k)\ge 1-\alpha$, with
finite-sample slack $1/(n_k+1)$.
\end{proposition}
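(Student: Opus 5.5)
The plan is to reduce Proposition~\ref{prop:mondrian} to Proposition~\ref{prop:cov} applied inside a single group. The exchangeability used will be conditional: given the group labels of all $n+1$ points, the calibration points of group $k$ and the test point, when it lies in group $k$, should form an exchangeable sequence of length $n_k+1$. This follows from Assumption~\ref{a:exch} when the group is a deterministic function of the input (event-length bucket from $\ell(X)$ or the true length, or a class label). Permuting indices that share the same group value leaves the joint law of the data invariant, and it also leaves the group-membership pattern invariant. So the conditional law given that pattern is invariant under permutations within group $k$. I will state this as the first step, conditioning on the event $\{\text{group}_{n+1}=k\}$ and on the realized index set $\mathcal{I}_k$ of calibration points in group $k$, with $|\mathcal{I}_k|=n_k$.

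Second, by Assumption~\ref{a:frozen} the score map is fixed. The scores $\{\score_i\}_{i\in\mathcal{I}_k}\cup\{\score_{n+1}\}$ are therefore conditionally exchangeable. The threshold $\qhat^{(k)}$ is the $\lceil(n_k+1)(1-\alpha)\rceil$-th smallest of the $n_k$ group scores. Rerunning the rank argument from the proof of Proposition~\ref{prop:cov} with $n$ replaced by $n_k$ gives
\[
\Prob\big(\score_{n+1}\le\qhat^{(k)}\,\big|\,\mathcal{I}_k,\ \text{group}_{n+1}=k\big)\ge\frac{\lceil (n_k+1)(1-\alpha)\rceil}{n_k+1}\ge 1-\alpha .
\]
Assumption~\ref{a:nested} then converts $\score_{n+1}\le\qhat^{(k)}$ into $\Igt_{n+1}\subseteq\Cset_{\qhat^{(k)}}(X_{n+1})$. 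Averaging over $\mathcal{I}_k$ (tower property) yields the stated conditional-on-group bound.

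Third, the slack claim comes from Proposition~\ref{prop:tight} applied within the group. If the group scores are almost surely distinct, the conditional coverage is at most $1-\alpha+1/(n_k+1)$.

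One edge case needs handling. If $(n_k+1)(1-\alpha)>n_k$, the order statistic does not exist; in that case I will set $\qhat^{(k)}=+\infty$, i.e. $\Cset=[0,T]$, and coverage is trivial. I expect the main obstacle to be the first step: justifying within-group exchangeability when $n_k$ is random. The cleanest route is to condition on the full group-label vector and check that within-group permutations preserve the conditional law. This requires the grouping to be a fixed function chosen before the labels are seen, which I will add as an explicit requirement alongside Assumption~\ref{a:frozen}.
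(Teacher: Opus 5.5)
Your proposal is correct and follows the paper's route: apply Proposition~\ref{prop:cov} within group $k$ with $n_k$ calibration points, then take the $1/(n_k+1)$ slack from Proposition~\ref{prop:tight}. Conditioning on the group-label vector, the tower step, and the $\qhat^{(k)}=+\infty$ convention for small $n_k$ make explicit what the paper's one-line proof simply asserts as ``within group $k$ the points are exchangeable.''
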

\begin{proof}
Within group $k$ the points are exchangeable, so Proposition \ref{prop:cov} applies
to the group's scores with $n_k$ calibration points, giving coverage at least
$1-\alpha$ conditional on the group and the tightness $1/(n_k+1)$ from
Proposition \ref{prop:tight}.
\end{proof}

\begin{proposition}[Distribution of coverage given the calibration set]\label{prop:beta}
Suppose the scores $\score_1,\dots,\score_{n+1}$ are i.i.d.\ with a continuous
distribution, and let $k=\lceil(n+1)(1-\alpha)\rceil$ so that
$\qhat=\score_{(k)}$. Conditional on the calibration scores, the realized coverage
$C=\Prob(\score_{n+1}\le\qhat\mid \score_1,\dots,\score_n)$ follows a
$\mathrm{Beta}(k,\,n+1-k)$ law. Hence $\E[C]=k/(n+1)\ge1-\alpha$ and
$\mathrm{sd}(C)=\big[k(n+1-k)/((n+1)^2(n+2))\big]^{1/2}=O(n^{-1/2})$.
\end{proposition}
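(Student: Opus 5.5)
The plan is the classical probability integral transform argument. Write $F$ for the common continuous distribution function of the scores. Conditional on $\score_1,\dots,\score_n$, the threshold $\qhat=\score_{(k)}$ is fixed. Because $\score_{n+1}$ is independent of the calibration scores and has law $F$, the realized coverage is
\[
C=\Prob(\score_{n+1}\le\qhat\mid\score_1,\dots,\score_n)=F(\score_{(k)}).
\]
The task therefore reduces to finding the law of $F$ evaluated at the $k$-th order statistic of $n$ i.i.d.\ draws from $F$.

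Set $U_i=F(\score_i)$ for $i=1,\dots,n$. By continuity of $F$, the $U_i$ are i.i.d.\ $\mathrm{Uniform}(0,1)$. Since $F$ is nondecreasing, $F(\score_{(k)})=U_{(k)}$, the $k$-th order statistic of the uniforms. Ties in $F$ on flat pieces do not affect which value sits at rank $k$, so monotonicity suffices.

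Next I use the standard fact that the $k$-th uniform order statistic of a sample of size $n$ is $\mathrm{Beta}(k,n-k+1)$. This follows from
\[
\Prob(U_{(k)}\le u)=\Prob(\mathrm{Bin}(n,u)\ge k)
\]
and differentiation, or directly from the multinomial density $\frac{n!}{(k-1)!(n-k)!}u^{k-1}(1-u)^{n-k}$. Note that $n-k+1=n+1-k$, which matches the parameters in the statement.

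Before this step I need $1\le k\le n$. When $\alpha<1/(n+1)$ we get $k=n+1$, so $\qhat=+\infty$ and $C\equiv1$, which is a degenerate law. I will state that the claim assumes $\alpha\ge 1/(n+1)$, and treat the edge case separately.

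The moments then follow from standard Beta formulas:
\[
\E[C]=\frac{k}{n+1}\ge 1-\alpha
\]
by the definition of the ceiling, and
\[
\mathrm{Var}(C)=\frac{k(n+1-k)}{(n+1)^2(n+2)}.
\]
Since $k(n+1-k)\le (n+1)^2/4$, the variance is at most $1/(4(n+2))$, so $\mathrm{sd}(C)=O(n^{-1/2})$.

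The main point needing care is the identity $F(\score_{(k)})=U_{(k)}$ together with the use of continuity. It is clean once monotonicity and uniformity of $F(\score)$ are invoked, and everything else is routine.
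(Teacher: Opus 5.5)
Your proposal is correct and follows the same route as the paper's proof: the probability integral transform gives $C=F(\score_{(k)})=U_{(k)}$, the $k$-th order statistic of $n$ i.i.d.\ uniforms, hence $\mathrm{Beta}(k,n+1-k)$ with the stated moments. You also handle two points the paper leaves implicit: the degenerate case $k=n+1$ (when $\alpha<1/(n+1)$, so $\qhat=+\infty$ and $C\equiv 1$), and the explicit bound $\mathrm{Var}(C)\le 1/(4(n+2))$ behind the $O(n^{-1/2})$ rate.
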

\begin{proof}
Let $U_i=F_S(\score_i)$. By the probability integral transform the $U_i$ are
i.i.d.\ $\mathrm{Uniform}(0,1)$, and since $\score_{n+1}$ is a fresh draw,
$C=F_S(\score_{(k)})=U_{(k)}$, the $k$-th order statistic of $n$ i.i.d.\ uniform
variables, which has the $\mathrm{Beta}(k,n+1-k)$ distribution. Its mean is
$k/(n+1)\ge1-\alpha$ and its variance is $k(n+1-k)/\big((n+1)^2(n+2)\big)=O(1/n)$,
so the standard deviation is $O(n^{-1/2})$.
\end{proof}

\noindent Proposition \ref{prop:beta} is the finite-sample counterpart of
Theorem \ref{thm:eff}(ii): it is why the resampled coverage in the Results section concentrates near the target with a spread that narrows
at the $n^{-1/2}$ rate as the calibration pool grows.

\section{Proofs of the Grounding-Specific Results}
\label{app:proofs}

\begin{proof}[Proof of Theorem \ref{thm:eff}]
\emph{(i) Length identity.} The candidate region is
$\Cset_\lambda(X)=[\shat-\lambda\ell(X),\,\ehat+\lambda\ell(X)]\cap[0,T]$. The
bracket has length $(\ehat+\lambda\ell)-(\shat-\lambda\ell)=(\ehat-\shat)+2\lambda\ell
=\ell(1+2\lambda)$ using $\ell=\ehat-\shat$, and intersecting with $[0,T]$ can only
decrease length, so $|\Cset_\lambda(X)|=\min\{\ell(1+2\lambda),T\}$. Setting
$\lambda=\qhat$ and taking expectations,
$\E|\Cset_{\qhat}(X)|\le\E[\ell(X)(1+2\qhat)]=\E[\ell(X)]+2\,\E[\ell(X)\qhat]$. If the
test input $X_{n+1}$ is independent of the calibration data then $\ell(X_{n+1})\perp\qhat$,
since $\qhat$ is a function of the calibration scores, so $\E[\ell(X)\qhat]=\E[\ell(X)]\E[\qhat]$
and the bound equals $\E[\ell(X)](1+2\,\E[\qhat])$.

\emph{(ii) Consistency.} Assume the calibration scores are i.i.d.\ (a strengthening
of Assumption \ref{a:exch}, as in the theorem statement); the Glivenko--Cantelli
and central-limit steps below use it. By Assumption \ref{a:nested},
$\Igt\subseteq\Cset_\lambda(X)\iff\score(X,\Igt)\le\lambda$, so the family coverage at
level $\lambda$ is $\Prob(\score\le\lambda)=F_S(\lambda)$. The empirical distribution
$\hat F_n$ of the calibration scores converges uniformly to $F_S$
(Glivenko--Cantelli), and $\qhat$ is the $k$-th order statistic with
$k=\lceil(n+1)(1-\alpha)\rceil$, i.e.\ the empirical $(1-\alpha+o(1))$-quantile, so
$\qhat\to q^\star:=Q_{1-\alpha}(F_S)$ almost surely. If $F_S$ is differentiable at
$q^\star$ with $f_S(q^\star)>0$, the Bahadur representation of sample quantiles gives
$\qhat=q^\star+\big[(1-\alpha)-\hat F_n(q^\star)\big]/f_S(q^\star)+o_{\mathrm p}(n^{-1/2})$;
since $\sqrt{n}\big[(1-\alpha)-\hat F_n(q^\star)\big]$ is asymptotically normal by the
central limit theorem, $\sqrt{n}\,(\qhat-q^\star)=O_{\mathrm p}(1)$.

\emph{(iii) Family length-optimality.} For continuous $F_S$, coverage $F_S(\lambda)$ is
nondecreasing and length $\ell(1+2\lambda)$ is strictly increasing in $\lambda$. Any
$\lambda$ with $F_S(\lambda)\ge1-\alpha$ satisfies $\lambda\ge q^\star$ by definition of
the quantile; every such $\lambda>q^\star$ gives a strictly longer region, and every
$\lambda<q^\star$ gives $F_S(\lambda)<1-\alpha$. Hence $\Cset_{q^\star}$ is the unique
shortest region in the family meeting the coverage constraint.
\end{proof}

\begin{proof}[Proof of Theorem \ref{thm:eff}(iv)]
For any $\tau$, $\Igt\subseteq\{t:f_X(t)\ge\tau\}$ holds iff $f_X(t)\ge\tau$ for every
$t\in\Igt$, i.e.\ $\min_{t\in\Igt}f_X(t)\ge\tau$, i.e.\
$\score(X,\Igt)=-\min_{t\in\Igt}f_X(t)\le-\tau$. The family coverage at threshold
$\tau$ is therefore $\Prob(\score\le-\tau)=F_S(-\tau)$, nonincreasing in $\tau$. The
Lebesgue measure $\mu(\tau)=|\{t\in[0,T]:f_X(t)\ge\tau\}|$ is nonincreasing in $\tau$
because the super-level sets are nested decreasing in $\tau$. Coverage at least
$1-\alpha$ requires $F_S(-\tau)\ge1-\alpha$, i.e.\ $-\tau\ge Q_{1-\alpha}(F_S)$, i.e.\
$\tau\le-Q_{1-\alpha}(F_S)$; among these thresholds the largest,
$\tau=-Q_{1-\alpha}(F_S)$, minimizes $\E[\mu(\tau)]$ by monotonicity of $\mu$. \cover{}
uses $\tau=-\qhat$ with $\qhat\to Q_{1-\alpha}(F_S)$ by Theorem \ref{thm:eff}(ii).
\end{proof}

\begin{proof}[Proof of Proposition \ref{prop:lcc}]
Under i.i.d.\ sampling the calibration pairs $(X_i,\Igt_i)_{i\le n}$ are independent of
the test pair $(X_{n+1},\Igt_{n+1})$, so $\qhat=g(\score_1,\dots,\score_n)$ is
independent of $(\score_{n+1},L_{n+1})$. Fix $l$. Because $\qhat\perp(\score_{n+1},L_{n+1})$,
the conditional law of $\qhat$ given $L_{n+1}=l$ equals its marginal, and $\qhat$ is
conditionally independent of $\score_{n+1}$ given $L_{n+1}=l$; because the scale model
gives $\score_{n+1}\perp L_{n+1}$, the conditional law of $\score_{n+1}$ given $L_{n+1}=l$
equals its marginal. Therefore
\[
\begin{aligned}
\Prob(\score_{n+1}\le\qhat\mid L_{n+1}=l)
= \\
\int \Prob(\score_{n+1}\le t)\,dP_{\qhat}(t)
= \Prob(\score_{n+1}\le\qhat)\ge1-\alpha,
\end{aligned}
\]
the inequality by Proposition \ref{prop:cov}. By Assumption \ref{a:nested} the event
$\{\score_{n+1}\le\qhat\}$ is $\{\Igt_{n+1}\subseteq\Cset_{\qhat}(X_{n+1})\}$, giving
length-conditional coverage. For the seconds score $\ell\equiv1$,
$\score(X,\Igt)=\max(\shat-\sstar,\,\estar-\ehat)$ carries absolute units and is in
general dependent on $L$ (longer events admit larger absolute boundary errors), so
$\score_{n+1}\not\perp L_{n+1}$ and conditioning on $L_{n+1}=l$ shifts its law.
\end{proof}

\section{Proof of Proposition \ref{prop:nonexch}}
\label{app:novel}

\begin{proof}
Write the containment event as $A=\{\score_{n+1}\le\qhat\}$, which by
Assumption \ref{a:nested} equals $\{\Igt_{n+1}\subseteq\Cset_{\qhat}(X_{n+1})\}$; and
let $Z_{1:n+1}^{(i)}$ swap the test point $Z_{n+1}$ with calibration point $Z_i$.
Under exchangeability of $Z_{1:n+1}$ the rank of $\score_{n+1}$ among
$\score_1,\dots,\score_{n+1}$ is uniform and $\Prob(A)\ge1-\alpha$
(Proposition \ref{prop:cov}). For a general joint law $P$, compare $P$ with each
swapped law $P^{(i)}$, the law of $Z_{1:n+1}^{(i)}$: for the fixed event $A$, the
definition of total-variation distance gives
$|\Prob_P(A)-\Prob_{P^{(i)}}(A)|\le d_{\mathrm{TV}}(Z_{1:n+1},Z_{1:n+1}^{(i)})$. Under
$P^{(i)}$ the test point occupies calibration position $i$, so the exchangeable rank
bound applies to each of the $n+1$ positions; aggregating these one-swap bounds with
uniform weights $1/(n+1)$, as in \citet[Theorem~2]{barber2023beyond}, gives
\[
\begin{aligned}
\Prob(A)\ \ge\ 1-\alpha-\frac{1}{n+1}\sum_{i=1}^{n}
d_{\mathrm{TV}}\big(Z_{1:n+1},Z_{1:n+1}^{(i)}\big)\ \\
= 
1-\alpha-\Delta .
\end{aligned}
\]

\emph{Exchangeable case.} If $Z_{1:n+1}$ is exchangeable then for every $i$ the
transposed sequence $Z_{1:n+1}^{(i)}$ has the same law as $Z_{1:n+1}$, so each
total-variation term vanishes, $\Delta=0$, and the bound reduces to
Proposition \ref{prop:cov}. This holds in particular when the moments are i.i.d.\ (for
example one moment per video with independent videos). More generally, each term
$d_{\mathrm{TV}}(Z_{1:n+1},Z_{1:n+1}^{(i)})$ is exactly the failure of the joint law to
be invariant under exchanging the test moment with calibration moment $i$, which is
non-zero only through statistical dependence linking the test moment to the rest; for
data drawn independently across videos this dependence is confined to calibration
moments sharing the test moment's video.

\emph{Mixing.} If that within-video sequence is stationary and mixing, its dependence
decays with temporal separation, so one expects each same-video term, and hence
$\Delta$, to shrink as the test moment is separated in time from the same-video
calibration moments. A quantitative bound requires the mixing coefficients of a
specified model and is left open. We do not claim a rate.
\end{proof}

\section{Assumptions at a Glance}
\label{app:assume}
Table ~\ref{tab:assumptions} summarizes the main assumptions made for the analysis.

\begin{table}[t]
  \centering
  \caption{\textbf{Key assumptions.}
  The three assumptions underlying COVER, the role each plays in the analysis,
  and the corresponding failure mode if the assumption is violated.}
  \label{tab:assumptions}
  \small

  \resizebox{\columnwidth}{!}{%
  \begin{tabular}{@{}p{0.25\linewidth}p{0.30\linewidth}p{0.30\linewidth}@{}}
    \toprule
    \kilth{Assumption} &
    \kilth{Role in the analysis} &
    \kilth{Failure mode} \\
    \midrule

    Assumption 1: Exchangeability &
    Makes calibration scores predictive of the test score; the source of the
    coverage bound. &
    Distribution shift or within-video dependence, quantified by
    Proposition \ref{prop:nonexch}. \\

    \addlinespace

    Assumption 2: Nested family &
    Makes the score a valid threshold and the miss-loss monotone. &
    A non-nested region family breaks both coverage and RCPS. \\

    \addlinespace

    Assumption 3: Frozen pipeline &
    Keeps the scores exchangeable (no peeking at calibration labels). &
    Tuning $g$ or $\score$ on calibration data invalidates the guarantee. \\

    \bottomrule
  \end{tabular}}
\end{table}

\section{Method Schematic}
\label{app:method-fig}
Figure~\ref{fig:method} illustrates the two-phase procedure specified by ~Eq.(\ref{eq:score-interval})--Eq. (\ref{eq:qhat}) and Algorithm~\ref{alg:cover}. Figure \ref{fig:qualitative_example} showcases a qualitative example.

\begin{algorithm}[t]
\caption{\cover{} (split-conformal temporal coverage)}
\label{alg:cover}
\begin{algorithmic}[1]
\Require grounder $g$; calibration $\{(X_i,\Igt_i)\}_{i=1}^n$; level $\alpha$;
nested family $\Cset_\lambda$ and score $\score$
\For{$i=1$ to $n$} \State $\score_i \gets \score(X_i,\Igt_i)$ \EndFor
\State $\qhat \gets \lceil (n+1)(1-\alpha)\rceil$-th smallest of $\{\score_i\}$
\State \textbf{at test input} $X$: \Return $\Cset_{\qhat}(X)$
\end{algorithmic}
\end{algorithm}

\begin{figure}[t]
  \centering
\kilgraphics{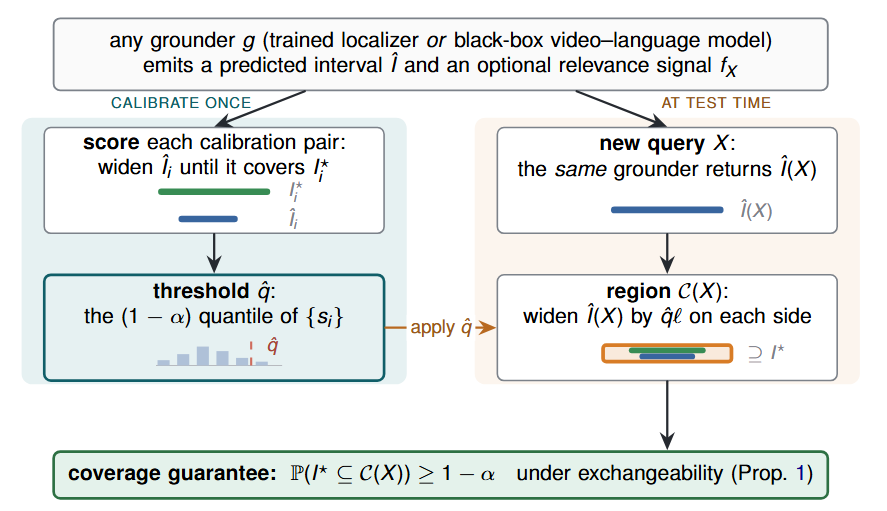}
  \caption{\textbf{The \cover{} wrapper.}
  Fhe same grounder $g$ drives both phases and
is never retrained. \emph{Calibrate once} (left, on held-out set): each example
is scored by the smallest per-side widening that makes the predicted interval
contain the true moment~Eq.(\ref{eq:score-interval}), and the threshold $\qhat$ is set
to the $(1-\alpha)$ quantile of those scores~Eq.~(\ref{eq:qhat}). \emph{At test time}
(right, per query): the predicted interval is widened by $\qhat\,\ell$ on each side,
where $\ell$ is the length scale of the score family, giving a region that contains
the true moment with probability at least $1-\alpha$ under
exchangeability (Proposition \ref{prop:cov}). A risk-control variant instead sizes the
widening to bound the expected miss-rate at a user level~(Proposition \ref{prop:rcps}).}
  \label{fig:method}
\end{figure}

\begin{figure}[t]
  \centering
\kilgraphics{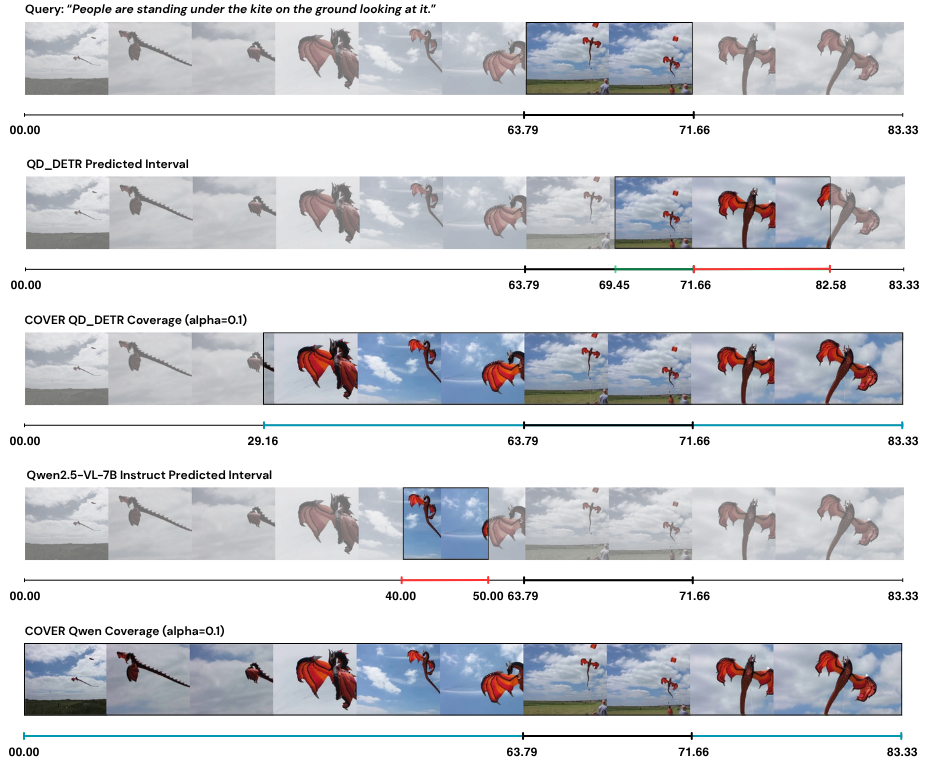}
  \caption{\textbf{Qualitative example of \cover{}}
   Cover widens the grounding interval based on the \normscore\ score of both grounders. \cover{} does not make a bad grounder good, it delivers an interval where the true moment exists with probability of $1-\alpha$. Here, $\alpha=0.1$. \textit{Timeline proportions are not uniform for visual illustration purposes.}}
  \label{fig:qualitative_example}
\end{figure}

\section{Coverage and Efficiency: Further Results}
\label{app:cov-further-results}
This section includes tables referred to in the main body of the paper, in addition to providing additional insight on the coverage-validity and efficiency findings.

\begin{figure}[t]
    \centering
    \kilgraphics{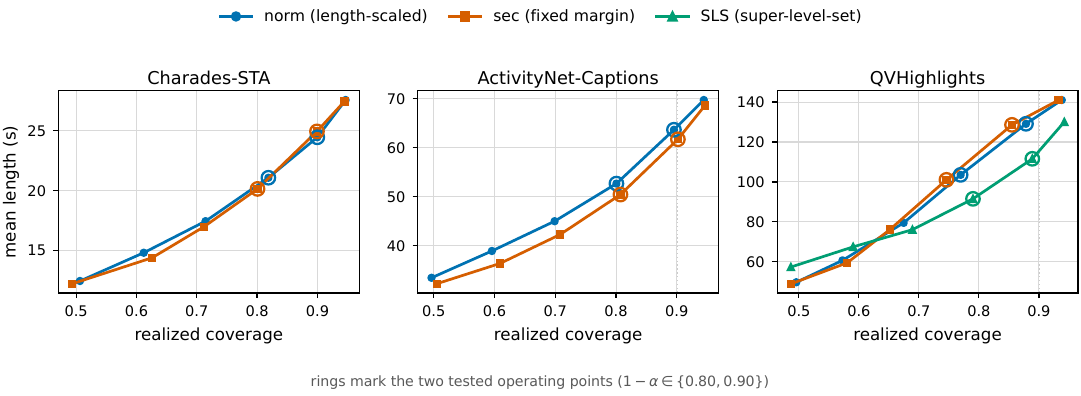}
    \caption{\textbf{Coverage-efficiency frontier, QD-DETR.} Mean region length against
    realized coverage for each score family and dataset. Rings mark the two tested
    operating points ($1-\alpha\in\{0.80,0.90\}$) reported in Table \ref{tab:e1}. The
    super-level-set family overtakes the interval families on QVHighlights at
    tighter targets.}
    \label{fig:e1-frontier}
\end{figure}

\begin{table}[t]
  \centering
  \caption{\textbf{Relative length of two-parameter calibration.}
  Percentage change in region length,
  $100\times(\mathrm{len}_{\text{2-param}}-\mathrm{len}_{\text{1-param}})/\mathrm{len}_{\text{1-param}}$,
  for both score families and both target coverages. Positive values indicate
  that the second parameter increases region length, while negative values
  indicate improved efficiency. QD-DETR generally favors a single parameter,
  whereas Qwen benefits from two. Table~\ref{tab:e1-cqr-lambdas} explains the
  underlying mechanism.}
  \label{tab:e1-cqr-excess}
  \small
  \setlength{\tabcolsep}{4.5pt}

  \resizebox{\linewidth}{!}{%
  \begin{tabular}{@{}ll*{4}{r}@{}}
    \toprule
    \hb
    & &
    \multicolumn{2}{c}{\normscore} &
    \multicolumn{2}{c}{\secscore} \\

    \cmidrule(lr){3-4}
    \cmidrule(lr){5-6}
    \hb
    \kilth{Grounder} &
    \kilth{Dataset} &
    \kilth{$1-\alpha=0.80$} &
    \kilth{$1-\alpha=0.90$} &
    \kilth{$1-\alpha=0.80$} &
    \kilth{$1-\alpha=0.90$} \\

    \midrule

    QD-DETR &
    Charades-STA &
    $-0.8\%$ &
    $+1.6\%$ &
    $-0.2\%$ &
    $+0.0\%$ \\

    QD-DETR &
    ActivityNet-Captions &
    $+0.8\%$ &
    $+0.9\%$ &
    $+1.0\%$ &
    $+0.4\%$ \\

    QD-DETR &
    QVHighlights &
    $+5.0\%$ &
    $+1.3\%$ &
    $+4.6\%$ &
    $+0.6\%$ \\

    \addlinespace

    Qwen &
    Charades-STA &
    $+1.6\%$ &
    $-1.5\%$ &
    $+4.3\%$ &
    $+0.3\%$ \\

    Qwen &
    ActivityNet-Captions &
    $-4.2\%$ &
    $-3.0\%$ &
    $-1.8\%$ &
    $-4.9\%$ \\

    Qwen &
    QVHighlights &
    $-9.1\%$ &
    $-8.6\%$ &
    $-6.6\%$ &
    $-7.2\%$ \\

    \bottomrule
  \end{tabular}}
\end{table}

\begin{table}[t]
  \centering
  \caption{\textbf{Calibrated boundary parameters in the two-parameter variant.}
  Mechanism behind Table~\ref{tab:e1-cqr-excess}: separately calibrated
  $\hat{\lambda}_{\text{start}}$ and $\hat{\lambda}_{\text{end}}$ for
  \normscore\ and \secscore\ at both target coverages. QD-DETR calibrates both
  boundaries to similar values throughout (ratio $0.9$--$1.6\times$), indicating
  that a single parameter is sufficient. In contrast, Qwen's start boundary
  collapses to nearly zero in seven of the twelve cases shown, already covering
  the true start for almost every calibration example, with nearly all boundary
  error concentrated at the end.}
  \label{tab:e1-cqr-lambdas}
  \tiny
  \setlength{\tabcolsep}{3pt}

  \resizebox{\linewidth}{!}{%
  \begin{tabular}{@{}ll*{4}{r}*{4}{r}@{}}
    \toprule
    \hb
    & &
    \multicolumn{4}{c}{$1-\alpha=0.80$} &
    \multicolumn{4}{c}{$1-\alpha=0.90$} \\

    \cmidrule(lr){3-6}
    \cmidrule(lr){7-10}
    \hb
    & &
    \multicolumn{2}{c}{\normscore} &
    \multicolumn{2}{c}{\secscore} &
    \multicolumn{2}{c}{\normscore} &
    \multicolumn{2}{c}{\secscore} \\

    \cmidrule(lr){3-4}
    \cmidrule(lr){5-6}
    \cmidrule(lr){7-8}
    \cmidrule(lr){9-10}
    \hb
    \kilth{Grounder} &
    \kilth{Dataset} &
    \kilth{$\hat{\lambda}_{\text{start}}$} &
    \kilth{$\hat{\lambda}_{\text{end}}$} &
    \kilth{$\hat{\lambda}_{\text{start}}$} &
    \kilth{$\hat{\lambda}_{\text{end}}$} &
    \kilth{$\hat{\lambda}_{\text{start}}$} &
    \kilth{$\hat{\lambda}_{\text{end}}$} &
    \kilth{$\hat{\lambda}_{\text{start}}$} &
    \kilth{$\hat{\lambda}_{\text{end}}$} \\

    \midrule

    QD-DETR &
    Charades-STA &
    1.058 & 1.282 &
    8.96 & 9.41 &
    1.621 & 1.860 &
    14.88 & 14.87 \\

    QD-DETR &
    ActivityNet-Captions &
    1.516 & 1.737 &
    22.33 & 28.72 &
    2.437 & 3.820 &
    40.78 & 50.22 \\

    QD-DETR &
    QVHighlights &
    2.031 & 1.883 &
    49.85 & 48.71 &
    3.243 & 3.564 &
    79.55 & 73.36 \\

    \addlinespace

    Qwen &
    Charades-STA &
    0.000 & 3.310 &
    0.00 & 18.48 &
    0.032 & 4.053 &
    0.20 & 22.20 \\

    Qwen &
    ActivityNet-Captions &
    0.000 & 10.167 &
    0.00 & 64.13 &
    0.756 & 15.359 &
    5.44 & 78.07 \\

    Qwen &
    QVHighlights &
    0.000 & 15.808 &
    0.00 & 101.20 &
    0.432 & 19.600 &
    3.80 & 115.40 \\

    \bottomrule
  \end{tabular}}
\end{table}

\subsection{Exchangeability Diagnostic for the Coverage-Validity Study}
\label{app:exchangeability}
\cref{sec:Results} reports that QD-DETR's realized coverage on QVHighlights ($0.879 \pm 0.016$ at $1-\alpha=0.90$) runs measurably below the
$k/(n+1)$ floor Proposition ~\ref{prop:beta} predicts
under i.i.d.\ calibration rows. That floor's derivation rests on
exchangeability (Assumption \ref{a:exch}) between the calibration and test draws, so we test the frozen split.

\emph{Procedure.} For a given (grounder, score family), we compute the
two-sample Kolmogorov--Smirnov statistic between the score's empirical
distribution on the frozen calibration pool and on the frozen test pool.
Significance is assessed by a permutation test rather than the classical
asymptotic KS $p$-value, since the latter assumes independent rows,
while multiple queries in QVHighlights share a source video ($\approx
3.3$ rows per source video in both pools here). We therefore permute at
the video level: pool every (video, score) pair across both frozen
pools, repeatedly reshuffle unit labels into pseudo-calibration/pseudo-test
groups of the original unit counts (every row for a unit moves together),
and recompute the KS statistic on each reshuffled draw over 2000
permutations. The $p$-value is $(\#\text{draws with KS} \geq
\text{observed} + 1)/2001$. This tests Assumption \ref{a:exch} at the same
video-level granularity as the frozen split. As a
check on the procedure's power at these sample sizes, the same test
applied to a held-out split with a deliberately injected shift reliably
rejects at this permutation budget, so null results reflect an
absence of detectable difference.

\emph{Result.} Neither grounder shows a significant difference between
its calibration and test pools on QVHighlights, under either score
family (Table~\ref{tab:qvh-exchangeability}).

\begin{table}[t]
  \centering
  \caption{\textbf{Exchangeability check on QVHighlights.}
  Video-level permutation test comparing each grounder's frozen
  QVHighlights calibration and test pools for both score families.
  No grounder--family pair exhibits a statistically significant difference.}
  \label{tab:qvh-exchangeability}
 \footnotesize
  \begin{tabular}{@{}llS[table-format=1.3]S[table-format=1.2]@{}}
    \toprule
    \hb
    \kilth{Grounder} &
    \kilth{Family} &
    \kilth{KS(cal, test)} &
    \kilth{Permutation $p$} \\
    \midrule

    QD-DETR &
    \normscore &
    0.043 &
    0.52 \\

    Qwen &
    \normscore &
    0.049 &
    0.39 \\

    QD-DETR &
    \secscore &
    0.065 &
    0.11 \\

    Qwen &
    \secscore &
    0.066 &
    0.08 \\

    \bottomrule
  \end{tabular}
\end{table}

\secscore\'s unnormalized score shows a consistently smaller $p$-value than
NORM's for both grounders, the same direction as the covariate-shift
diagnostic's finding (Table \ref{tab:e4a}) that
length-normalization absorbs part of any residual covariate imbalance
between pools, here within one dataset's own split rather than across
two datasets. Notably, the smaller of each family's two $p$-values
belongs to Qwen, not QD-DETR, in both cases: the grounder \emph{without}
a marginal coverage problem shows the larger (still non-significant)
split-level divergence. Even without
adjusting for testing across two grounders and two families, the
smallest of the four $p$-values ($0.08$) clears a conservative
Bonferroni threshold for four comparisons ($0.05/4=0.0125$) with room to
spare.

Ruling out the split points instead to the grounder-specific
conditional-coverage gap already measured in
Table~\ref{tab:e3}:  %(Section \ref{subsec:E3}): 
QD-DETR undercovers
QVHighlights' Long and Multi-window strata by $0.148$ and $0.127$
respectively, against Qwen's $0.008$ and $0.046$ on the same
strata, which is similar to the grounder split the marginal numbers show. We
attribute the marginal coverage gap to this subgroup effect, amplified by QVHighlights'
small calibration pool.

The same procedure applied to Charades-STA (\secscore, QD-DETR, a
dataset/grounder pair that is as on-target) returns
$\mathrm{KS}=0.033$, $p=0.51$, consistent with the QVHighlights results
reflecting genuinely exchangeable splits.

\paragraph{Efficiency detail}
Following the discussion in \Cref{sec:Results}, table~\ref{tab:e1-qwen-refusal} isolates Qwen's calibrated widening from refusal-driven inflation.

Table \ref{tab:e1-uncalibrated} reports the same nested family
and evaluation engine as COVER, with $\lambda$ fixed by hand instead of
calibrated. No single value is adequate across datasets: $\lambda=20$s
exceeds QD-DETR's own $0.90$-target Charades-STA coverage ($0.958$) while
giving only $0.629$ on QVHighlights and, under Qwen, just $0.169$. The
spread is starkest for Qwen on QVHighlights, where realized coverage
across the six values tested ranges from $0.096$ to $1.000$ (a range of
$0.904$, nearly the full attainable interval) achieved by varying only
which round number of seconds is picked, with no data-driven step
involved. Even the largest value tested, $\lambda=160$s, which reaches
coverage $\geq 0.999$ everywhere, does so by spending $149.6$s on every
QVHighlights row regardless of how easy that row actually is (indistinguishable from a full-video fallback ($T\approx150$s)) where
COVER's calibrated region reaches comparable coverage at the $0.90$
target in $128.6$s (Table \ref{tab:e1}) and needs only $24.9$s on Charades-STA,
adapting per row rather than spending a fixed budget everywhere. This demonstrates that a trivially large fixed margin covers, just as a trivial whole-video region does, but only the calibration step delivers a specific target at a length that reflects what each row actually needs.

\begin{table}[t]
  \centering
  \caption{\textbf{Marginal versus non-refused region length for Qwen2.5-VL-7B.}
  Region length (seconds) at target coverage $1-\alpha=0.90$. Marginal length
  includes the full-video fallback on refused examples, whereas non-refused
  length isolates the effect of calibrated interval widening.}
  \label{tab:e1-qwen-refusal}
  \small
  \begin{tabular}{@{}lS[table-format=2.1]*{2}{S[table-format=3.1]}*{2}{S[table-format=3.1]}@{}}
    \toprule
    \hb
    &
    \kilth{Refusal} &
    \multicolumn{2}{c}{\normscore} &
    \multicolumn{2}{c}{\secscore} \\

    \cmidrule(lr){3-4}
    \cmidrule(lr){5-6}
    \hb
    \kilth{Dataset} &
    \kilth{(\%)} &
    \kilth{Marg.} &
    \kilth{Non-ref.} &
    \kilth{Marg.} &
    \kilth{Non-ref.} \\

    \midrule

    Charades-STA &
    10.4 &
    25.4 &
    24.5 &
    26.1 &
    25.3 \\

    ActivityNet-Captions &
    2.9 &
    63.9 &
    63.5 &
    66.6 &
    66.2 \\

    QVHighlights &
    2.6 &
    129.1 &
    128.6 &
    133.5 &
    133.1 \\

    \bottomrule
  \end{tabular}
\end{table}

\section{Additional Experiments}
\label{app:extra}
We start with additional grounders' coverage and efficiency evaluation. Then we
report in full the four further studies summarized in \cref{sec:Results}.
Finally, we present a downstream-utility experiment.

\paragraph{Additional grounders coverage and efficiency}

To test whether COVER's guarantee and qualitative efficiency patterns are
specific to QD-DETR and Qwen or hold more broadly across other  localizers, we
add TR-DETR~\cite{sun_zhou2024tr}, integrated via the same Lighthouse framework as QD-DETR with no additional training. We evaluate on InternVL3.5-8B \cite{wang2025internvl3_5} as the second black-box VLM grounder. Table~\ref{tab:internvl} reports region length at matched target coverage. We then evaluate on SRAM \cite{ma2024beyond}, a grounder that \emph{reports} uncertainty by training. 

\paragraph{Protocols}
\paragraph{Risk control: RCPS miss-rate control}
We select $\lambda$ via the risk-controlling procedure of \cite{bates2021rcps}: treat the containment loss $\mathcal{L}_\lambda(X,\Igt)=\mathbf{1}[\score(X,\Igt)>\lambda]$ as nonincreasing in $\lambda$ (Assumption \ref{a:nested}), form a point-wise upper confidence bound on its risk from the calibration scores via a Hoeffding concentration inequality \cite{boucheron2013concentration}, and take $\hat\lambda$ as the smallest threshold whose bound stays at or below the target miss-rate $\alpha$ for every larger threshold. This is a materially different calibration step from the plain $(1-\alpha)$
quantile of Eq. (\ref{eq:qhat}): it trades away some region tightness for a high-probability guarantee over the calibration draw itself, rather than only in expectation.
We run this procedure at $\alpha=0.1$ with confidence $1-\delta=0.9$, for both base grounders, on every dataset and applicable score family. Three quantities are reported. The first is the realized miss-rate of the resulting $\hat\lambda$ on the frozen test set, which should not exceed $\alpha$. The second, using the video-level resampling protocol described in the main paper, is the fraction of resampled calibration draws whose $\hat\lambda$ would violate the target miss-rate on the unresampled test set. This operationalizes the $\delta$ confidence statement empirically, since Proposition \ref{prop:rcps}'s guarantee is itself a probability over the calibration draw. The third compares $\hat\lambda$ and its mean region length against the plain split-conformal $\hat\lambda$ from the coverage-validity and efficiency studies at the same nominal $\alpha$, showing what the stronger, high-probability guarantee costs in region size relative to the tighter but differently-scoped marginal one.

\paragraph{Conditional coverage}
The marginal coverage validated earlier is an average, and an average can hide a
badly undercovered subpopulation behind another that overcovers. Since exact
attribute-free conditional coverage is unattainable distribution-free, this study evaluates
the two practical substitutes the paper adopts. The diagnostic half calibrates one
threshold on the full pool and reports coverage within terciles of the true event
length, and, on QVHighlights, separately for single- versus multi-window ground
truth, because its strata come from the truth, it is evaluation-only. The Mondrian
half instead implements the group-conditional calibration of
Proposition \ref{prop:mondrian}, fitting a separate threshold per stratum with
strata defined by the model's own predicted length, so that they are available
before the answer is known.

\paragraph{Cross-dataset transfer}
The studies so far keep calibration and test within one dataset. This one drops that
restriction and calibrates on one dataset while testing on another, exercising the
exchangeability assumption rather than assuming it. We proceed in three steps. A
covariate-shift diagnostic comes first: a two-sample statistic asks whether the two
datasets differ on candidate covariates and, separately, whether that difference
survives into the conformal score, which the normalization may already absorb in
part and both are read against a same-dataset noise floor. We then measure unweighted
transfer, calibrating on one dataset and evaluating on the other's test pool,
with a baseline that calibrates on the test set's own dataset.
Finally we apply a weighted correction, but not the usual one, rather than fit a
continuous per-test-point density ratio, we bucket on video duration with a hard
cutoff and calibrate a separate threshold per bucket. Unlike true event length, duration needs no ground truth, so the rule could
apply to an unlabeled query, and unlike predicted length it is not already baked
into the normalized score, which divides by predicted length directly. No reweighting scheme, simplified
or literal, can correct shift where the calibration dataset has no support as it
has nothing to draw on.

\paragraph{Calibration-set size}
Theorem \ref{thm:eff} predicts the calibrated threshold converges to its population value as calibration size grows, with estimation noise shrinking at the standard $\sqrt{n}$ rate. This study checks that directly and gives a practical read on how much labeled calibration data is actually needed before coverage stabilizes. At a fixed target ($\alpha=0.1$), we repeatedly subsample smaller calibration pools and report how the spread of realized coverage across resamples changes as subsample size grows.

\paragraph{Downstream utility of the certified width}
\label{sec:e-utility}
The marginal guarantee is population-level, and the per-example width is
where individual-level information lives. We test this post-hoc on the
same calibrated $\hat{\lambda}$ that Table \ref{tab:e1} uses. For every non-refused test row we pair the region width at the
$1-\alpha=0.90$ operating point with the base grounder's point-prediction IoU against ground truth, and separately run a selective-prediction rule that
retains a row only when its certified width is at or below a budget $\tau$,
sweeping $\tau$ over the width distribution.

\subsection{Additional Grounders}
\label{app:additional-grounders}

\begin{figure}[t]
    \centering
    \kilgraphics{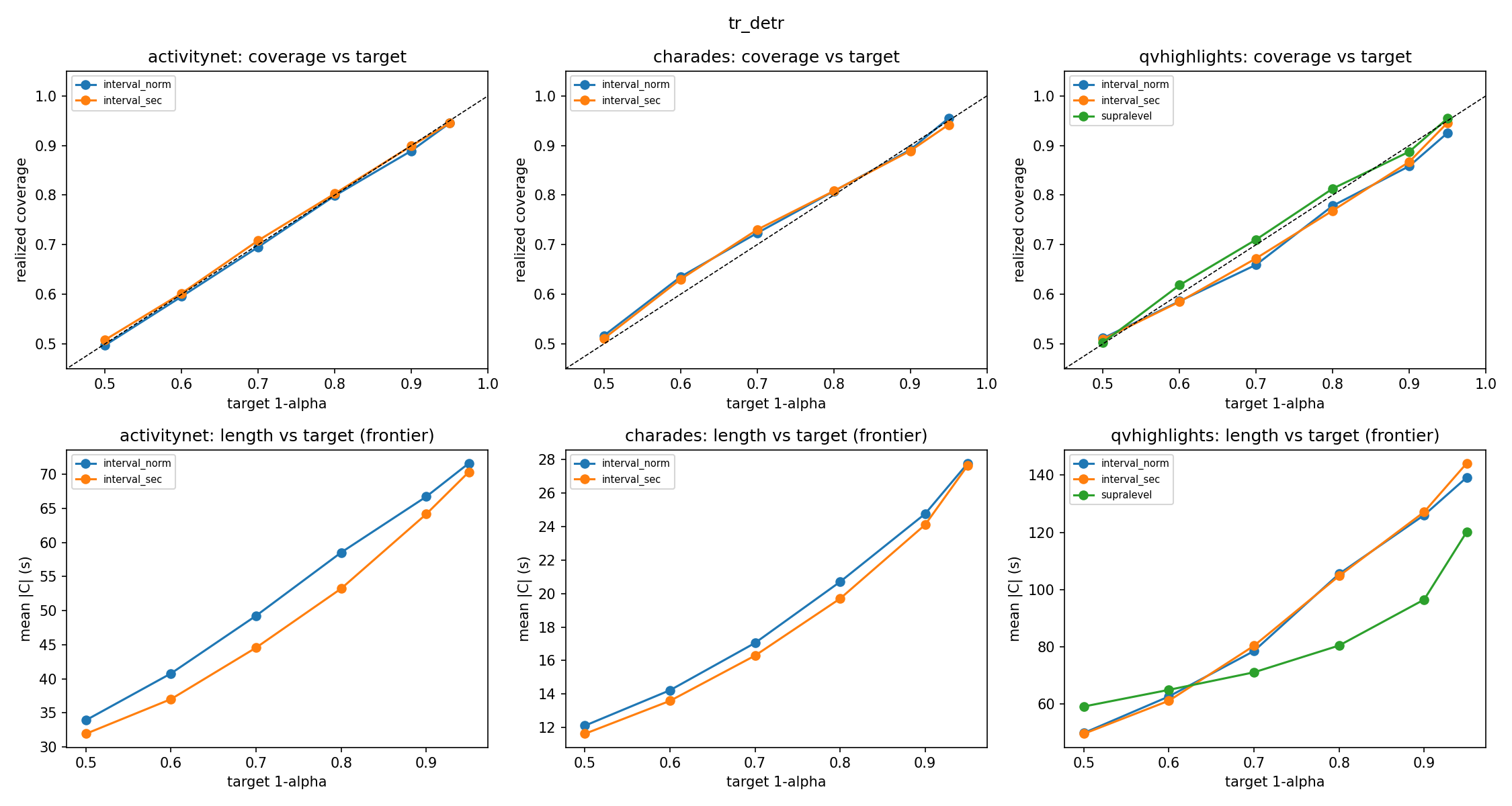}
    \caption{\textbf{Coverage validity and efficiency (TR-DETR).}
    Target coverage is achieved, confirming COVER applies to various grounders in the DETR grounders family.}
    \label{fig:e0-tr-detr}
\end{figure}

\begin{figure}[t]
    \centering
    \kilgraphics{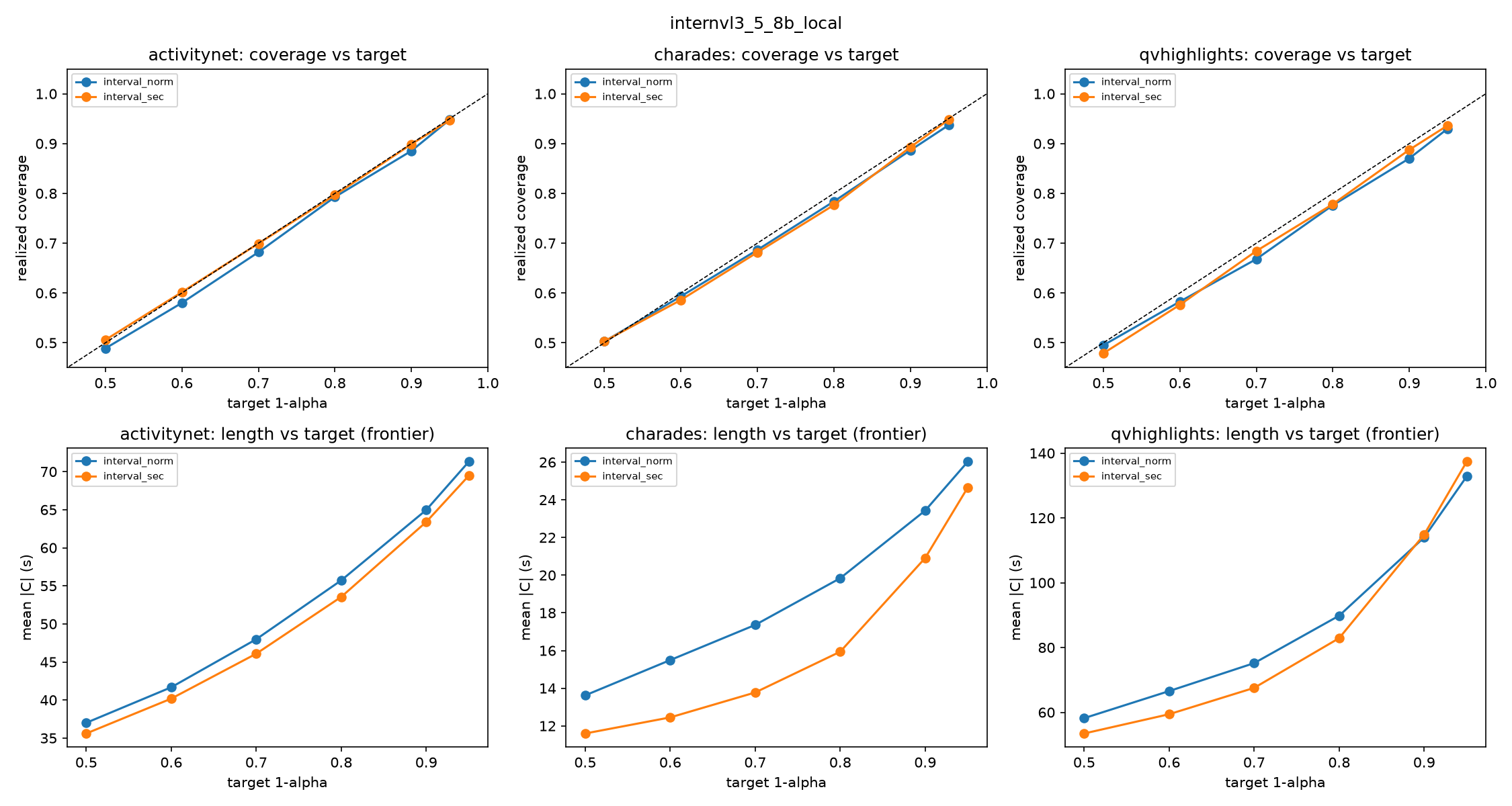}
    \caption{\textbf{Coverage validity and efficiency (InternVL3.5-8B).}
    Target coverage is achieved across the $1-\alpha$ grid, confirming COVER also applies to various black-box VLM grounders.}
    \label{fig:internvl}
\end{figure}

\paragraph{Coverage validity} For TR-DETR, realized coverage tracks target across the full
$1-\alpha \in \{0.50,\ldots,0.95\}$ grid on all three datasets and both
score families (Figure~\ref{fig:e0-tr-detr}), with all shortfalls being within the combined test and calibration sampling tolerance. Similarly for InternVL, target coverage across the $1-\alpha \in \{0.50,\ldots,0.95\}$
grid on Charades-STA, QVHighlights, and on ActivityNet-Captions is achieved (shown in Figure \ref{fig:internvl}).

\paragraph{Region length} Tables \ref{tab:tr-detr} and \ref{tab:internvl} report length at
matched target coverage.

\emph{Charades-STA crossing does not replicate on TR-DETR.} QD-DETR shows significant reversal between targets (\secscore\ shorter at $1-\alpha=0.80$, \normscore\ shorter at $0.90$). TR-DETR shows no such
reversal: \secscore\ is significantly shorter at \emph{both} targets
($+1.00$s, CI $[0.77,1.23]$ at $0.80$; $+0.65$s, CI $[0.41,0.92]$ at $0.90$). Figure \ref{fig:e0-tr-detr} establishes \secscore\ is shorter for all other targets as well on ActivityNet and Charades.

\emph{The QVHighlights \supralevel\ efficiency advantage holds, and is substantially larger for TR-DETR.} \supralevel\ is significantly shorter than both interval
families at both targets for TR-DETR, matching QD-DETR qualitatively. The magnitude differs considerably: \supralevel\  is $10.5\% $ and $13.4\%$ shorter than the interval-family average for QD-DETR at $1-\alpha=0.80$ and $0.90$, versus $23.5\%$ and $23.8\%$ for TR-DETR, roughly double. Shown in Figure \ref{fig:e0-tr-detr}, this supremacy does not hold on looser targets (0.5 and 0.6). TR-DETR's own
\normscore-vs-\secscore\ comparison on QVHighlights is a statistical tie at \emph{both} targets ($+0.71$s, CI $[-0.91,2.28]$ at $0.80$; $-1.09$s,
CI $[-2.67,0.43]$ at $0.90$), whereas QD-DETR's tie holds only at $0.90$ (its own $0.80$-target gap is significant). ActivityNet-Captions shows the
same qualitative pattern as QD-DETR at both targets (\secscore\  significantly shorter), with a larger absolute gap at $1-\alpha=0.80$ ($5.29$s vs.\ QD-DETR's $2.23$s).

\emph{\secscore\ is more efficient for InternVL.} Table \ref{tab:internvl} shows that \secscore\ is significantly shorter than \normscore\ in five of six
(dataset, target) combinations (Charades-STA and ActivityNet-Captions at both
targets, QVHighlights at $1-\alpha=0.80$). The sixth, QVHighlights at
$1-\alpha=0.90$, is a genuine statistical tie ($-0.89$s, CI $[-2.74,0.88]$). This is also evident for other target shown in Figure \ref{fig:internvl}
This is the opposite pattern from Qwen2.5-VL, whose results
favor \normscore\ in most comparisons, which indicates the identity of the more
efficient family is not a property of grounder class (VLM vs.\ trained
localizer) but varies by individual model.

\paragraph{Training for uncertainty does not produce coverage}
The most direct test of whether a post-hoc wrapper is needed is to point it at a
grounder that already models its own uncertainty. SRAM \cite{ma2024beyond} is a
deep-evidential-regression grounder that reports boundary uncertainty by
training. Converting its own predicted boundary uncertainty into a Gaussian
region at nominal $1-\alpha$, in its own best configuration and on the dataset it
was trained on, realizes $0.660$ against the $0.80$ target and $0.716$ against
$0.90$ (Table~\ref{tab:e1-SRAM}): a shortfall of $14$ and $18$ points on a
quantity the method is built to deliver. The nominal level is not a guarantee but
a label. Wrapping the identical predictions in \cover{} realizes coverage within
$0.005$ of every target, at up to $50\%$ more length. Architecture-level uncertainty modeling produces an uncertainty estimate, and distribution-free calibration is what turns an estimate
into a guarantee.

\begin{table*}[t]
  \centering
  \caption{\textbf{TR-DETR region length at matched target coverage.}
  Region length (seconds) with 95\% confidence intervals at matched target
  coverage. The shortest statistically significant score family for each
  dataset and target coverage is shown in \textbf{bold}. Blank entries indicate
  that the score family is not applicable.}
  \label{tab:tr-detr}
  \small
  \setlength{\tabcolsep}{3.5pt}

  \resizebox{\textwidth}{!}{%
  \begin{tabular}{@{}l*{3}{S[table-format=3.1]c}*{3}{S[table-format=3.1]c}@{}}
    \toprule
    \hb
    &
    \multicolumn{6}{c}{$1-\alpha=0.80$} &
    \multicolumn{6}{c}{$1-\alpha=0.90$} \\

    \cmidrule(lr){2-7}
    \cmidrule(lr){8-13}
    \hb
    &
    \multicolumn{2}{c}{\normscore} &
    \multicolumn{2}{c}{\secscore} &
    \multicolumn{2}{c}{\supralevel} &
    \multicolumn{2}{c}{\normscore} &
    \multicolumn{2}{c}{\secscore} &
    \multicolumn{2}{c}{\supralevel} \\

    \cmidrule(lr){2-3}
    \cmidrule(lr){4-5}
    \cmidrule(lr){6-7}
    \cmidrule(lr){8-9}
    \cmidrule(lr){10-11}
    \cmidrule(lr){12-13}
    \hb
    \kilth{Dataset} &
    \kilth{Len.} & \kilth{CI} &
    \kilth{Len.} & \kilth{CI} &
    \kilth{Len.} & \kilth{CI} &
    \kilth{Len.} & \kilth{CI} &
    \kilth{Len.} & \kilth{CI} &
    \kilth{Len.} & \kilth{CI} \\

    \midrule

    Charades-STA &
    20.7 & {\scriptsize[20.16,21.22]} &
    \bfseries 19.7 & {\scriptsize[19.35,20.02]} &
    {--} & {--} &
    24.7 & {\scriptsize[24.15,25.33]} &
    \bfseries 24.1 & {\scriptsize[23.71,24.48]} &
    {--} & {--} \\

    ActivityNet-Captions &
    58.5 & {\scriptsize[57.01,60.02]} &
    \bfseries 53.2 & {\scriptsize[52.19,54.26]} &
    {--} & {--} &
    66.7 & {\scriptsize[65.02,68.44]} &
    \bfseries 64.1 & {\scriptsize[62.78,65.47]} &
    {--} & {--} \\

    QVHighlights &
    105.4 & {\scriptsize[102.90,107.95]} &
    104.7 & {\scriptsize[103.07,106.43]} &
    \bfseries 80.4 & {\scriptsize[77.15,83.48]} &
    126.0 & {\scriptsize[123.83,128.08]} &
    127.1 & {\scriptsize[125.70,128.50]} &
    \bfseries 96.4 & {\scriptsize[93.52,99.46]} \\

    \bottomrule
  \end{tabular}}
\end{table*}

\begin{table*}[t]
  \centering
  \caption{\textbf{InternVL3.5-8B region length at matched target coverage.}
  Region length (seconds) with 95\% confidence intervals at matched target
  coverage. The shortest statistically significant score family for each
  dataset and target coverage is shown in \textbf{bold}. No
  \supralevel\ column is reported because this grounder exposes no
  saliency or relevance signal.}
  \label{tab:internvl}
  \small
  \setlength{\tabcolsep}{3.5pt}

  \resizebox{\linewidth}{!}{%
  \begin{tabular}{@{}l*{2}{S[table-format=3.1]c}*{2}{S[table-format=3.1]c}@{}}
    \toprule
    \hb
    &
    \multicolumn{4}{c}{$1-\alpha=0.80$} &
    \multicolumn{4}{c}{$1-\alpha=0.90$} \\

    \cmidrule(lr){2-5}
    \cmidrule(lr){6-9}
    \hb
    &
    \multicolumn{2}{c}{\normscore} &
    \multicolumn{2}{c}{\secscore} &
    \multicolumn{2}{c}{\normscore} &
    \multicolumn{2}{c}{\secscore} \\

    \cmidrule(lr){2-3}
    \cmidrule(lr){4-5}
    \cmidrule(lr){6-7}
    \cmidrule(lr){8-9}
    \hb
    \kilth{Dataset} &
    \kilth{Len.} & \kilth{CI} &
    \kilth{Len.} & \kilth{CI} &
    \kilth{Len.} & \kilth{CI} &
    \kilth{Len.} & \kilth{CI} \\

    \midrule

    Charades-STA &
    19.8 & {\scriptsize[19.21,20.48]} &
    \bfseries 15.9 & {\scriptsize[15.62,16.25]} &
    23.4 & {\scriptsize[22.80,24.07]} &
    \bfseries 20.9 & {\scriptsize[20.56,21.27]} \\

    ActivityNet-Captions &
    55.7 & {\scriptsize[54.27,57.10]} &
    \bfseries 53.5 & {\scriptsize[52.43,54.62]} &
    65.0 & {\scriptsize[63.35,66.59]} &
    \bfseries 63.4 & {\scriptsize[62.07,64.75]} \\

    QVHighlights &
    89.8 & {\scriptsize[86.74,92.96]} &
    \bfseries 82.9 & {\scriptsize[80.72,85.02]} &
    114.0 & {\scriptsize[111.22,116.61]} &
    114.9 & {\scriptsize[113.20,116.53]} \\

    \bottomrule
  \end{tabular}}
\end{table*}

\begin{table}[t]
  \centering
  \caption{\textbf{SRAM versus SRAM + COVER.} SRAM's evidential uncertainty
    converted directly into prediction regions at nominal coverage $1-\alpha$
    (without a calibration step; SRAM trained on Charades-STA) compared with
    SRAM + COVER. \colorbox{underc}{Shaded} cells indicate coverage more than
    $0.02$ below the target.}
  \label{tab:e1-SRAM}
  \small
  \setlength{\tabcolsep}{4.5pt}
  \begin{tabular}{@{}l*{2}{r}*{2}{r}@{}}
    \toprule
    \hb
    & \multicolumn{2}{c}{$1-\alpha=0.80$} & \multicolumn{2}{c}{$1-\alpha=0.90$} \\
    \cmidrule(lr){2-3}\cmidrule(lr){4-5}
    \hb
    \kilth{Method} & \kilth{Cov.} & \kilth{Len.} & \kilth{Cov.} & \kilth{Len.}\\
    \midrule
    SRAM & \colorbox{underc}{0.660} & 14.2\,s & \colorbox{underc}{0.716} & 15.7\,s \\
    \addlinespace
    SRAM \emph{+ \cover{}} (\normscore) & 0.797 & 19.3\,s & 0.895 & 23.5\,s \\
    SRAM \emph{+ \cover{}} (\secscore)  & 0.796 & 18.4\,s & 0.898 & 23.2\,s \\
    \bottomrule
  \end{tabular}
\end{table}

\subsection{Risk Control: RCPS Miss-Rate Control}
\label{subsec:E2}

\begin{table*}[t]
  \centering
  \caption{\textbf{RCPS versus split conformal.}
  RCPS ($\alpha=0.1$, $\delta=0.1$) compared with plain split conformal at the
  same nominal $\alpha$ for both grounders. \emph{Viol.} denotes the fraction
  of 50 resampled calibration draws whose calibrated
  $\hat{\lambda}$ violated the target miss rate on the frozen test set. The
  RCPS guarantee bounds this fraction by $\delta=0.10$. RCPS trades a modest
  increase in region length ($\Delta$) for a stronger high-probability
  guarantee.}
  \label{tab:e2}
  \scriptsize
  \setlength{\tabcolsep}{5pt}

  \resizebox{\linewidth}{!}{%
  \begin{tabular}{@{}lll
      S[table-format=3.3]
      S[table-format=1.3]
      S[table-format=3.1]
      S[table-format=1.2]
      S[table-format=3.1]
      S[table-format=+2.1]@{}}
    \toprule
    \hb
    \kilth{Grounder} &
    \kilth{Dataset} &
    \kilth{Family} &
    \kilth{$\hat{\lambda}_{\mathrm{RCPS}}$} &
    \kilth{Cov.} &
    \kilth{Len. (s)} &
    \kilth{Viol.} &
    \kilth{Len.\textsubscript{split}} &
    \kilth{$\Delta$ (\%)} \\

    \midrule

    \multirow{7}{*}{QD-DETR}
      & Charades-STA
      & \normscore
      & 2.035
      & 0.928
      & 26.1
      & 0.00
      & 24.4
      & +7.0 \\

      & Charades-STA
      & \secscore
      & 16.419
      & 0.918
      & 26.0
      & 0.00
      & 24.9
      & +4.3 \\

      & ActivityNet-Captions
      & \normscore
      & 3.361
      & 0.906
      & 64.9
      & 0.00
      & 63.6
      & +2.1 \\

      & ActivityNet-Captions
      & \secscore
      & 48.777
      & 0.912
      & 63.2
      & 0.00
      & 61.7
      & +2.4 \\

      & QVHighlights
      & \normscore
      & 4.778
      & 0.933
      & 140.4
      & 0.00
      & 129.1
      & +8.7 \\

      & QVHighlights
      & \secscore
      & 93.350
      & 0.920
      & 139.8
      & 0.00
      & 128.6
      & +8.7 \\

      & QVHighlights
      & \supralevel
      & 12.163
      & 0.936
      & 128.1
      & 0.00
      & 111.6
      & +14.8 \\

    \addlinespace

    \multirow{6}{*}{Qwen}
      & Charades-STA
      & \normscore
      & 3.733
      & 0.923
      & 26.1
      & 0.00
      & 25.4
      & +2.9 \\

      & Charades-STA
      & \secscore
      & 20.700
      & 0.925
      & 27.2
      & 0.02
      & 26.1
      & +4.2 \\

      & ActivityNet-Captions
      & \normscore
      & 11.516
      & 0.908
      & 64.9
      & 0.00
      & 63.9
      & +1.5 \\

      & ActivityNet-Captions
      & \secscore
      & 67.180
      & 0.912
      & 67.5
      & 0.00
      & 66.6
      & +1.4 \\

      & QVHighlights
      & \normscore
      & 19.304
      & 0.941
      & 134.5
      & 0.00
      & 129.1
      & +4.2 \\

      & QVHighlights
      & \secscore
      & 112.100
      & 0.931
      & 139.8
      & 0.00
      & 133.5
      & +4.7 \\

    \bottomrule
  \end{tabular}}
\end{table*}

\begin{table*}[t]
  \centering
  \caption{\textbf{Conditional coverage, \normscore\ only, at the $1-\alpha=0.90$ target.} Diagnostic: one marginal $\lambda$, stratified post hoc by
true event length (evaluation-only). Mondrian: a separate $\lambda$ per stratum of
the model's own \emph{predicted} length. Strata are length terciles. QVHighlights
adds single- vs.\ multi-window. \colorbox{underc}{Shaded} coverage cells fall more
than $0.02$ below target.
Table~\ref{tab:e3-sec} repeats this diagnostic for \secscore. Qwen's Mondrian calibration additionally produces a fourth, \emph{refused} stratum for every dataset, which are rows with no predicted length to assign a tercile from, trivially covered at $1.000$ by the refusal policy and omitted here since it is not one of the three predicted length strata.}
  \label{tab:e3}
  \LARGE
  \setlength{\tabcolsep}{4.5pt}
  \resizebox{\linewidth}{!}{%
  \begin{tabular}{@{}ll *{8}{r}@{}}
    \toprule
    \hb & & \multicolumn{2}{c}{\gh{QD-DETR diag.}} & \multicolumn{2}{c}{\gh{QD-DETR Mondrian}} & \multicolumn{2}{c}{\gh{Qwen diag.}} & \multicolumn{2}{c}{\gh{Qwen Mondrian}} \\
    \cmidrule(lr){3-4}\cmidrule(lr){5-6}\cmidrule(lr){7-8}\cmidrule(lr){9-10}
    \hb \kilth{Dataset} & \kilth{Stratum} & \kilth{Cov.} & \kilth{Len.} & \kilth{Cov.} & \kilth{Len.} & \kilth{Cov.} & \kilth{Len.} & \kilth{Cov.} & \kilth{Len.}\\
    \midrule
    \multirow{3}{*}{Charades-STA} & Short  & 0.881 & 21.1 & 0.897 & 19.0 & 0.900 & 22.4 & 0.891 & 19.6 \\
                                    & Medium & 0.908 & 24.1 & 0.901 & 25.3 & 0.901 & 25.0 & 0.892 & 27.0 \\
                                    & Long   & 0.903 & 27.9 & 0.903 & 29.8 & 0.880 & 28.5 & 0.898 & 30.1 \\
    \addlinespace
    \multirow{3}{*}{ActivityNet-Captions} & Short  & 0.895 & 37.6 & 0.896 & 36.2 & 0.907 & 41.0 & 0.887 & 49.5 \\
                                            & Medium & 0.925 & 61.9 & 0.890 & 67.4 & 0.913 & 61.0 & 0.898 & 63.9 \\
                                            & Long   & \colorbox{underc}{0.862} & 93.3 & 0.902 & 86.4 & \colorbox{underc}{0.868} & 91.7 & 0.895 & 82.3 \\
    \addlinespace
    \multirow{3}{*}{QVHighlights} & Short  & 0.947 & 119.8 & \colorbox{underc}{0.844} & 132.0 & 0.884 & 116.5 & 0.932 & 115.9 \\
                                    & Medium & 0.942 & 134.8 & \colorbox{underc}{0.863} & 129.7 & 0.923 & 128.6 & 0.930 & 141.9 \\
                                    & Long   & \colorbox{underc}{0.752} & 131.7 & \colorbox{underc}{0.858} & 126.4 & 0.892 & 141.1 & 0.886 & 146.9 \\
    \addlinespace
    \multirow{2}{*}{QVHighlights} & Single-window & 0.935 & 131.3 & {--} & {--} & 0.925 & 128.2 & {--} & {--} \\
                                    & Multi-window  & \colorbox{underc}{0.773} & 125.0 & {--} & {--} & \colorbox{underc}{0.854} & 130.9 & {--} & {--} \\
    \bottomrule
  \end{tabular}
  }
\end{table*}

Table~\ref{tab:e2} reports the risk-controlling calibration of Proposition \ref{prop:rcps} at $\alpha=0.1$, $\delta=0.1$, against the plain split-conformal result at the same nominal $\alpha$ (from the efficiency study).
The empirical violation fraction is exactly zero in 12 of the 13 (grounder, dataset, family) combinations tested. The one exception, Qwen on Charades-STA with \secscore, shows 1 of 50 draws (0.02), still within the $\delta=0.1$ budget. 

RCPS's region length exceeds the plain split-conformal length at every combination tested, and the excess follows calibration size inversely. For QD-DETR, the increase is 2.1--2.4\% on ActivityNet-Captions ($n_\mathrm{cal}=7811$), 4.3--7.0\% on Charades-STA ($n_\mathrm{cal}=1500$), and 8.7--14.8\% on QVHighlights ($n_\mathrm{cal}=625$). Qwen shows the same ordering (1.4--1.5\%, 2.9--4.2\%, 4.2--4.7\%, respectively). This matches the Hoeffding-based slack term in the RCPS construction, which scales as $1/\sqrt{n}$: the smallest calibration pool pays the largest length cost for the stronger guarantee.

\subsection{Conditional Coverage}
\label{subsec:E3}

Table~\ref{tab:e3} reports both halves of this experiment for \normscore\ at $1-\alpha=0.90$. The diagnostic half calibrates a single marginal threshold and reports coverage stratified post-hoc by true event-length tercile and, on QVHighlights, by single- vs.\ multi-window ground truth. The Mondrian half calibrates a separate threshold per tercile of the model's own \emph{predicted} length. Because the two halves stratify the test set by different, only loosely correlated properties, we do not compare same-named strata across the two halves directly, we compare each half's own worst-covered stratum against the 0.90 target.

For QD-DETR, marginal calibration leaves a real gap on QVHighlights: the worst diagnostic stratum (Long) deviates from target by 0.148 (coverage 0.752), while the worst Mondrian stratum (Short) deviates by only 0.056 (coverage 0.844), which constitutes more than 2.5$\times$ reduction in worst-case deviation. The same pattern, smaller in magnitude, holds on ActivityNet-Captions (worst diagnostic deviation 0.038 vs.\ worst Mondrian deviation 0.010) and Charades-STA (0.019 vs.\ 0.003). For Qwen the pattern holds on Charades-STA (0.020 vs.\ 0.009) and ActivityNet-Captions (0.032 vs.\ 0.013), but not on QVHighlights, where Mondrian's worst-stratum deviation (0.032) is slightly larger than the diagnostic half's (0.023). This is the one case in which stratified calibration does not clearly reduce worst-case deviation, an effect of QVHighlights' small calibration pool (625 rows) split further into per-stratum subsets.

For completeness, we compare QVHighlights' single- vs.\ multi-window stratification, evaluated only in the diagnostic half (no Mondrian equivalent, since that half stratifies by length, not window count), shows the same qualitative failure mode: multi-window queries are undercovered relative to target (QD-DETR: 0.773, Qwen: 0.854) while single-window queries are not (0.935, 0.925).

\begin{table*}[t]
  \centering
  \caption{\textbf{Length-stratified coverage by score family.}
  \normscore\ versus \secscore\ at the $1-\alpha=0.90$ target for both
  grounders. Relative to \normscore, \secscore\ undercovers the Long stratum
  while overcovering the Short and Medium strata on Charades-STA and
  ActivityNet-Captions. QVHighlights' Long and Multi-window strata are
  additionally affected by the grounder-specific effect discussed in the text.
  Strata are defined as in Table~\ref{tab:e3}.
  \colorbox{underc}{Shaded} coverage values fall more than $0.02$ below the
  nominal target.}
  \label{tab:e3-sec}
  \footnotesize
  \setlength{\tabcolsep}{4.5pt}

  \resizebox{\linewidth}{!}{%
  \begin{tabular}{@{}ll*{8}{r}@{}}
    \toprule

   \hb & &
    \multicolumn{2}{c}{\gh{QD-DETR \normscore}} &
    \multicolumn{2}{c}{\gh{QD-DETR \secscore}} &
    \multicolumn{2}{c}{\gh{Qwen \normscore}} &
    \multicolumn{2}{c}{\gh{Qwen \secscore}} \\

    \cmidrule(lr){3-4}
    \cmidrule(lr){5-6}
    \cmidrule(lr){7-8}
    \cmidrule(lr){9-10}
    \hb
    \kilth{Dataset} &
    \kilth{Stratum} &

    \kilth{Cov.} &
    \kilth{Len.} &
    \kilth{Cov.} &
    \kilth{Len.} &
    \kilth{Cov.} &
    \kilth{Len.} &
    \kilth{Cov.} &
    \kilth{Len.} \\

    \midrule

    Charades-STA
      & Short
      & 0.881 & 21.1
      & 0.898 & 22.6
      & 0.900 & 22.4
      & 0.906 & 23.6 \\

      & Medium
      & 0.908 & 24.1
      & 0.915 & 24.9
      & 0.901 & 25.0
      & 0.919 & 25.7 \\

      & Long
      & 0.903 & 27.9
      & 0.884 & 27.1
      & 0.880 & 28.5
      & \colorbox{underc}{0.834} & 28.6 \\

    \addlinespace

    ActivityNet-Captions
      & Short
      & 0.895 & 37.6
      & 0.938 & 42.9
      & 0.907 & 41.0
      & 0.968 & 48.7 \\

      & Medium
      & 0.925 & 61.9
      & 0.948 & 59.2
      & 0.913 & 61.0
      & 0.949 & 64.7 \\

      & Long
      & \colorbox{underc}{0.862} & 93.3
      & \colorbox{underc}{0.817} & 84.4
      & \colorbox{underc}{0.868} & 91.7
      & \colorbox{underc}{0.778} & 87.8 \\

    \addlinespace

    QVHighlights
      & Short
      & 0.947 & 119.8
      & 0.937 & 123.5
      & 0.884 & 116.5
      & 0.961 & 131.5 \\

      & Medium
      & 0.942 & 134.8
      & 0.936 & 129.5
      & 0.923 & 128.6
      & 0.972 & 132.4 \\

      & Long
      & \colorbox{underc}{0.752} & 131.7
      & \colorbox{underc}{0.691} & 132.2
      & 0.892 & 141.1
      & \colorbox{underc}{0.723} & 136.6 \\

    \addlinespace

    QVHighlights
      & Single-window
      & 0.935 & 131.3
      & 0.899 & 128.8
      & 0.925 & 128.2
      & 0.937 & 133.5 \\

      & Multi-window
      & \colorbox{underc}{0.773} & 125.0
      & \colorbox{underc}{0.766} & 128.3
      & \colorbox{underc}{0.854} & 130.9
      & \colorbox{underc}{0.785} & 133.5 \\

    \bottomrule
  \end{tabular}}
\end{table*}

\begin{table}[t]
  \centering
  \caption{\textbf{Covariate-shift diagnostic.}
  Two-sample KS statistics for Charades-STA (calibration) $\rightarrow$
  ActivityNet-Captions (test) using QD-DETR. The ratio is the cross-dataset KS
  divided by the larger of the two within-dataset KS statistics. Larger
  cross-dataset KS values and ratios indicate stronger distribution shift. The
  reverse direction yields closely matching results (not shown).}
  \label{tab:e4a}
  \small
  \begin{tabular}{@{}l
      S[table-format=1.3]
      S[table-format=1.3]
      S[table-format=1.3]
      S[table-format=2.1]@{}}
    \toprule
    \hb
    \kilth{Feature} &
    \kilth{Within-cal.} &
    \kilth{Within-test} &
    \kilth{Cross} &
    \kilth{Ratio ($\times$)} \\

    \midrule

    True event length & 0.033 & 0.016 & 0.526 & 16.0 \\
    Video duration    & 0.072 & 0.054 & 0.689 & 9.5 \\
    Predicted length  & 0.041 & 0.030 & 0.574 & 13.9 \\

    \addlinespace

    Score, \normscore & 0.025 & 0.014 & 0.141 & 5.7 \\
    Score, \secscore  & 0.033 & 0.014 & 0.199 & 6.1 \\

    \bottomrule
  \end{tabular}
\end{table}

\begin{table}[t]
  \centering
  \caption{\textbf{Cross-dataset transfer.}
  QD-DETR matched (calibrated on the target dataset), unweighted transfer, and
  weighted transfer (duration-bucketed with two buckets). Gap is defined as
  matched minus transferred coverage; values closer to zero indicate better
  transfer. C$\rightarrow$A denotes calibration on Charades-STA and testing on
  ActivityNet-Captions; A$\rightarrow$C is the reverse direction.}
  \label{tab:e4b}
  \small
  \setlength{\tabcolsep}{5pt}
  \begin{tabular}{@{}ll
      S[table-format=1.2]
      S[table-format=1.3]
      S[table-format=1.3]
      S[table-format=+1.3]
      S[table-format=1.3]
      S[table-format=+1.3]@{}}
    \toprule
    \hb
    &
    &
    &
    &
    \multicolumn{2}{c}{\gh{Unweighted}} &
    \multicolumn{2}{c}{\gh{Weighted}} \\

    \cmidrule(lr){5-6}
    \cmidrule(lr){7-8}

    \hb
    \kilth{Dir.} &
    \kilth{Family} &
    \kilth{Target} &
    \kilth{Matched} &
    \kilth{Cov.} &
    \kilth{Gap} &
    \kilth{Cov.} &
    \kilth{Gap} \\

    \midrule

    C$\rightarrow$A
      & \normscore
      & 0.80
      & 0.800
      & 0.752
      & +0.048
      & 0.752
      & +0.048 \\

    C$\rightarrow$A
      & \normscore
      & 0.90
      & 0.894
      & 0.812
      & +0.083
      & 0.812
      & +0.082 \\

    C$\rightarrow$A
      & \secscore
      & 0.80
      & 0.807
      & 0.632
      & +0.175
      & 0.632
      & +0.175 \\

    C$\rightarrow$A
      & \secscore
      & 0.90
      & 0.902
      & 0.713
      & +0.189
      & 0.713
      & +0.189 \\

    \addlinespace

    A$\rightarrow$C
      & \normscore
      & 0.80
      & 0.820
      & 0.885
      & -0.065
      & 0.865
      & -0.045 \\

    A$\rightarrow$C
      & \normscore
      & 0.90
      & 0.898
      & 0.979
      & -0.081
      & 0.960
      & -0.063 \\

    A$\rightarrow$C
      & \secscore
      & 0.80
      & 0.801
      & 0.989
      & -0.188
      & 0.837
      & -0.036 \\

    A$\rightarrow$C
      & \secscore
      & 0.90
      & 0.899
      & 1.000
      & -0.101
      & 0.957
      & -0.058 \\

    \bottomrule
  \end{tabular}
\end{table}

Table~\ref{tab:e3-sec} isolates Proposition \ref{prop:lcc}'s normalization claim, on Charades-STA and ActivityNet-Captions, \secscore\ undercovers
Long relative to \normscore\ in all four (grounder, dataset) pairs
($-0.019$ to $-0.090$) and overcovers Short/Medium in all eight
comparisons ($+0.006$ to $+0.061$). Twelve of twelve stratum comparisons
matching the direction Proposition \ref{prop:lcc} predicts for an unnormalized,
seconds-valued score. QVHighlights' Long and Multi-window strata show the
same directional gap for both grounders and both families, but are
confounded there by the grounder-specific effect in the
exchangeability diagnostic.

\begin{table*}[t]
  \centering
  \caption{\textbf{Coverage stability vs.\ calibration size (in videos).} \normscore,
    target $1-\alpha=0.90$. Cells: mean\,$\pm$\,s.d.\ of realized coverage over 50
    resamples; the spread narrows as size grows, matching the $O(n^{-1/2})$ rate of
    Proposition \ref{prop:beta}.}
  \label{tab:e5}
  \small
  \setlength{\tabcolsep}{4pt}
  \resizebox{\linewidth}{!}{%
  \begin{tabular}{@{}ll ccccc@{}}
    \toprule
    \hb \kilth{Grounder} & \kilth{Dataset} & {\kilth{$n{=}20$}} & {\kilth{$n{=}50$}} & {\kilth{$n{=}100$}} & {\kilth{$n{=}200$}} & {\kilth{$n{=}400$}}\\
    \midrule
    \multirow{3}{*}{QD-DETR} & Charades-STA         & 0.910$\pm$0.042 & 0.906$\pm$0.025 & 0.900$\pm$0.018 & 0.897$\pm$0.013 & 0.899$\pm$0.007 \\
                              & ActivityNet-Captions & 0.901$\pm$0.028 & 0.897$\pm$0.017 & 0.893$\pm$0.013 & 0.896$\pm$0.007 & 0.895$\pm$0.004 \\
                              & QVHighlights         & 0.885$\pm$0.041 & 0.876$\pm$0.028 & 0.876$\pm$0.018 & \dag             & \dag             \\
    \addlinespace
    \multirow{3}{*}{Qwen}    & Charades-STA         & 0.886$\pm$0.059 & 0.893$\pm$0.028 & 0.898$\pm$0.022 & 0.895$\pm$0.014 & 0.895$\pm$0.007 \\
                              & ActivityNet-Captions & 0.889$\pm$0.037 & 0.897$\pm$0.021 & 0.892$\pm$0.014 & 0.895$\pm$0.011 & 0.895$\pm$0.006 \\
                              & QVHighlights         & 0.905$\pm$0.027 & 0.900$\pm$0.019 & 0.899$\pm$0.009 & \dag             & \dag             \\
    \bottomrule
  \end{tabular}}
  \tabnote{\dag~QVHighlights' pool does not support $n=200,400$.}
\end{table*}

\begin{table}[t]
  \centering
  \caption{\textbf{Downstream utility, selective-prediction view.} QD-DETR,
    \normscore, $1-\alpha=0.90$. Retain a row when its certified width is at or
    below a budget; columns give retention level, mean point-IoU of retained vs.\
    abstained rows, and coverage among the retained.   On Charades, retained rows better localized than abstained ones (a
    width budget improves quality). On QVHighlights this is reversed, and
    narrow-budget retained rows undercover the $0.90$ target: marginal theory
    does not guarantee coverage after conditioning on width.}
  \label{tab:e-utility-selective}
  \small
  \begin{tabular}{@{}l S[table-format=1.2] *{2}{S[table-format=1.3]} S[table-format=1.3]@{}}
    \toprule
    \hb & & \multicolumn{2}{c}{\gh{Point IoU}} & \\
    \cmidrule(lr){3-4}
    \hb \kilth{Dataset} & {\kilth{Retention}} & {\kilth{ret.}} & {\kilth{abst.}} & {\kilth{Cov.\ (ret.)}}\\
    \midrule
    \multirow{3}{*}{Charades-STA} & 0.20 & 0.497 & 0.426 & 0.881 \\
                                  & 0.50 & 0.478 & 0.402 & 0.868 \\
                                  & 1.00 & 0.440 & {--}  & 0.898 \\
    \addlinespace
    \multirow{3}{*}{QVHighlights} & 0.20 & 0.435 & 0.547 & 0.632 \\
                                  & 0.50 & 0.497 & 0.552 & 0.762 \\
                                  & 1.00 & 0.525 & {--}  & 0.879 \\
    \bottomrule
  \end{tabular}

\end{table}

\begin{table}[t]
  \centering
  \caption{\textbf{Downstream utility, correlation view.} \normscore,
    $1-\alpha=0.90$. Spearman $\rho$ between certified region width and the base
    grounder's \emph{point}-prediction IoU over non-refused test rows, with mean
    point-IoU by width quartile (Q1 narrowest, Q4 widest). Negative $\rho$ or decreasing quartiles indicates narrower regions
    correspond to better localization; the sign is dataset-dependent. $n$ is the
    non-refused count.}
  \label{tab:e-utility-corr}
  \small
  \begin{tabular}{@{}ll c *{4}{S[table-format=1.3]} S[table-format=5.0]@{}}
    \toprule
    \hb & & & \multicolumn{4}{c}{\gh{Point-IoU by width quartile}} & \\
    \cmidrule(lr){4-7}
    \hb
    \kilth{Grounder} & \kilth{Dataset} & \kilth{$\rho$} & {\kilth{Q1}} & {\kilth{Q2}} & {\kilth{Q3}} & {\kilth{Q4}} & {\kilth{$n$}}\\
    \midrule
    \multirow{3}{*}{QD-DETR} & Charades-STA         & $-0.130$ & 0.489 & 0.468 & 0.419 & 0.384 & 2220  \\
                              & ActivityNet-Captions & $+0.026$ & 0.424 & 0.424 & 0.425 & 0.446 & 11685 \\
                              & QVHighlights         & $+0.115$ & 0.446 & 0.547 & 0.535 & 0.571 & 925   \\
    \addlinespace
    \multirow{3}{*}{Qwen}    & Charades-STA         & $-0.142$ & 0.306 & 0.240 & 0.168 & 0.232 & 1989  \\
                              & ActivityNet-Captions & $-0.039$ & 0.274 & 0.221 & 0.210 & 0.251 & 11346 \\
                              & QVHighlights         & $+0.061$ & 0.053 & 0.104 & 0.192 & 0.125 & 901   \\
    \bottomrule
  \end{tabular}%
 
\end{table}

\subsection{Cross-Dataset Transfer}
\label{subsec:E4}

This experiment uses QD-DETR and the Charades-STA and ActivityNet-Captions pair. Table~\ref{tab:e4a} reports a two-sample KS statistic comparing the two datasets on three raw covariates and on the two conformal scores directly, each judged against the same-dataset (calibration-vs-test) KS as a noise floor. All three raw covariates differ far beyond this floor (9.5--16.0$\times$), confirming a substantial covariate shift between the two datasets. The conformal scores differ by a smaller but still clearly non-noise margin (5.7--6.1$\times$). \normscore's built-in length-normalization reduces the raw shift by roughly three-quarters (cross-KS 0.14--0.15 vs.\ 0.57--0.61 for raw predicted length) without eliminating it, and \secscore's unnormalized score shows only a modestly larger residual shift than \normscore's (0.199 vs.\ 0.14--0.15).

Table~\ref{tab:e4b} reports coverage under unweighted transfer and under a bucketed weighted correction, against the matched, same-dataset baseline, in both directions. Unweighted transfer degrades coverage substantially in both directions, in opposite ways: calibrating on Charades-STA and testing on ActivityNet-Captions undercovers by 4.8--18.9 points and the reverse direction overcovers by 6.5--18.8 points. The weighted correction resolves this asymmetrically. From ActivityNet-Captions to Charades-STA, it works as every gap shrinks substantially, the largest being \secscore\ at 0.80 (18.8 points to 3.6). From Charades-STA to ActivityNet-Captions, it does nothing, weighted and unweighted coverage agree to three decimal places on every row. This is a positivity limitation rather than a shortcoming of the bucketed approximation as Charades-STA's duration distribution does not extend into the range most ActivityNet-Captions test videos occupy, so there is no source-side calibration data in that region for any reweighting scheme to draw on. The C$\to$A high-duration bucket draws on just $2$ of Charades-STA's $1500$ calibration rows (triggering a fallback to the full pool), while the working A$\to$C direction draws on $2783$ of ActivityNet-Captions' $7811$ calibration rows for the bucket spanning Charades-STA's range, a gap of two orders of magnitude.

\subsection{Calibration-Set Size}
\label{subsec:E5}

Table~\ref{tab:e5} reports realized coverage for \normscore\ at $\alpha=0.1$ as calibration size grows (in units of videos). For both grounders, the mean matches the 0.90 target at every size tested, and the spread narrows as size grows. For QD-DETR on Charades-STA, from $\pm 0.042$ at 20 units to $\pm 0.007$ at 400, and similarly on ActivityNet-Captions ($\pm 0.028$ to $\pm 0.004$). Qwen shows the same pattern on both datasets. This is consistent with Theorem \ref{thm:eff}'s asymptotic rate. QVHighlights could only be tested at 20, 50, and 100 units for either grounder, since its calibration pool does not contain enough distinct videos to support the larger sizes tested on the other two datasets. Within that smaller range, the same narrowing trend holds (QD-DETR: $\pm 0.041$ to $\pm 0.018$; Qwen: $\pm 0.027$ to $\pm 0.009$).

\subsection{Downstream utility of the certified width}
\emph{Width carries an actionable difficulty signal, but only where predicted
length tracks localization quality.} Table~\ref{tab:e-utility-corr} reports the Spearman correlation between width and point IoU. On Charades-STA it is negative for both grounders ($-0.130$/$-0.142$ under \normscore, $-0.151$/$-0.128$
under \secscore), and QD-DETR's width-quartile mean IoUs fall monotonically
from the narrowest to the widest quartile ($0.489\!\to\!0.384$), which demonstrates that a narrow
certified region is a better-localized one. On the long-event datasets the
association vanishes (ActivityNet-Captions, $|\rho|\le0.10$) or reverses
(QVHighlights \normscore\, $\rho=+0.115$/$+0.061$, quartile IoU rising with width). The mechanism is that the \normscore\ width scales with predicted length, and on a long event a narrow predicted interval is typically an
under-shot prediction that misses much of the event.
The width is the deployment-relevant quantity regardless, but its usefulness is dataset-dependent. The correlations are modest in magnitude throughout, and the quartile means and the selective view separate the tails more cleanly than the full-sample rank correlation.
 
\emph{Selective prediction turns the signal into a deployment rule and
subsumes refusal handling.} Table~\ref{tab:e-utility-selective} traces the rule on QD-DETR. On Charades-STA the retained rows are better localized than the abstained ones at every budget (retaining the narrowest $20\%$: point IoU $0.497$ retained vs.\ $0.426$ abstained, narrowest $50\%$: $0.478$ vs.\ $0.402$),
so a width budget raises the average quality of the auto-processed set. On
QVHighlights the ordering flips, consistent with the inverted correlation:
narrow-budget retained rows are worse localized than the abstained ones
($0.435$ vs.\ $0.547$ at $20\%$ retention). Because a refused row takes the whole-video fallback (width $=T$), refusals are abstained automatically whenever the video is longer than the budget, the common case in
QVHighlights, so the one width mechanism
filters uncertain localizations and outright refusals together with no
special-casing. A refusal on a video already shorter than the budget is
retained, but its region is then the whole (short) video, within budget by
construction.
 
\emph{Coverage among the retained is not guaranteed.} Table \ref{tab:e-utility-selective} shows the two families
diverge as Proposition \ref{prop:lcc} predicts. Under \normscore, narrow-budget selection mildly undercovers on Charades-STA and ActivityNet-Captions ($\approx0.86$) and severely on QVHighlights ($0.591$ at
the tightest budget, recovering to the marginal $0.879$ at full retention), which is the
same QVHighlights weakness the conditional-coverage analysis already discusses. Under \secscore, narrow-budget selection instead over covers, reaching
realized coverage $1.000$ at the tightest budgets on Charades-STA and
ActivityNet-Captions for both grounders, because a fixed seconds margin is large
relative to a short prediction. This is the length-conditional behavior of Proposition \ref{prop:lcc} seen from the deployment side. The normalized
score is approximately length-conditional (up to the QVHighlights breakdown), while the unnormalized one is not.

\begin{kilrepro}
All experiments were performed on a single RTX 4090 GPU sourced through Vast.ai  platform \cite{vastai2026}. \cover{} on its own does not require significant computational capabilities. The majority of the cost lies in grounder inference, especially for local models such as Qwen. 
\end{kilrepro}

\end{document}